\documentclass[preprint]{elsarticle}

\usepackage{amsthm,amssymb,amsbsy,amsmath,amsfonts,amssymb,amscd}
\usepackage{graphicx}
\usepackage{dsfont}
\usepackage{ulem}
\usepackage{setspace}
\usepackage{subfigure}
\usepackage{cancel}
\usepackage{bbm}
\usepackage{multirow}
\usepackage[colorlinks=true,linkcolor=black,citecolor=black,urlcolor=black,plainpages=false]{hyperref}

\newcommand{\V}{\mathrm{var}}
\newcommand{\C}{\mathrm{cov}}
\newcommand\dx[1]{\mathrm{d} \mu(#1)}
\newcommand\dix[2]{\mathrm{d} \mu_{#2}(#1)}
\newcommand\PS[1]{{\langle #1 \rangle}_\mathcal{H}}

\newcommand\N[1]{{\left| \left| #1 \right| \right|}_\mathcal{H}}

\newtheorem{hyp}{H}%[section]
\newtheorem{propo}{Proposition}%[section]

\theoremstyle{definition}
%[section]

\newenvironment{coro}[1][Corollary]{\begin{trivlist}
\item[\hskip \labelsep {\bfseries #1}]}{\end{trivlist}}

%\newcommand{\ttbs}{\char'134}
%\renewcommand{\thefootnote}{\fnsymbol{footnote}}

%\doublespacing
%%%%%%%%%%%%%%%%%%%%%%%%%%%%%%%%%%%%%%%%%%%%%%%%%
\title{ANOVA kernels and RKHS of zero mean functions for model-based sensitivity analysis}
%\title{Reproducing kernels for spaces of zero mean functions. Application to sensitivity analysis}

\author[SHEFF,EMSE]{N.~Durrande\corref{cor1}}
\ead{durrande@emse.fr}
\author[BERNE]{D.~Ginsbourger}
\ead{ginsbourger@gmail.com}
\author[EMSE]{O.~Roustant}
\ead{roustant@emse.fr}
\author[TELEC]{L.~Carraro}
\ead{laurent.carraro@telecom-st-etienne.fr}

\cortext[cor1]{Corresponding author. Nicolas Durrande, email: durrande@emse.fr, phone: (+44) (0)114 222 3824.}

\address[SHEFF]{School of mathematics and statistics, University of Sheffield, Sheffield S3 7RH, UK}
\address[EMSE]{Ecole Nationale Sup\'erieure des Mines, FAYOL-EMSE, LSTI, F-42023 Saint-Etienne, France}
\address[BERNE]{Institute of Mathematical Statistics and Actuarial Science, University of Berne, Alpeneggstrasse 22 - 3012 Bern, Switzerland}
\address[TELEC]{TELECOM St Etienne, 25 rue du Dr R\'emy Annino - 42000 Saint Etienne, France}

%%%%%%%%%%%%%%%%%%%%%%%%%%%%%%%%%%%%%%%%%%%%%%%%%%%%%%%%%%%%%%%%%%%%%%%%
\begin{document}

\begin{keyword}
Gaussian process regression \sep global sensitivity analysis \sep Hoeffding-Sobol Decomposition \sep SS-ANOVA
\end{keyword}

\begin{abstract}
%We propose in this note an orthogonal decomposition of Reproducing Kernel Hilbert Spaces (RKHS) based on usual kernels. 
Given a reproducing kernel Hilbert space $(\mathcal{H}, \langle ., . \rangle )$ of real-valued functions and a suitable measure $\mu$ over the source space $D \subset \mathds{R}$, we decompose $\mathcal{H}$ as the sum of a subspace of centered functions for $\mu$ and its orthogonal in $\mathcal{H}$. 
This decomposition leads to a special case of ANOVA kernels, for which the functional ANOVA representation of the best predictor can be elegantly derived, either in an interpolation or regularization framework. The proposed kernels appear to be particularly convenient for analyzing the effect of each (group of) variable(s) and computing sensitivity indices without recursivity. 
\end{abstract}

\maketitle

\section{Introduction}

Let $f$ be a real-valued function defined over $D \subset \mathds{R}^d$. 
We assume that $f$ is costly to evaluate and that we want to study some global properties of $f$ such as the influence of a variable $x_i$ on the output $f(x_1,\dots,x_d)$. 
As the number of evaluations of $f$ is limited, it may be unaffordable to run sensitivity analysis methods directly on $f$. 
Thus, it can be helpful to replace $f$ by a mathematical approximation for performing such studies~\cite{Marrel2009}. 
In this article we propose a class of models that is well suited for performing global sensitivity analysis. 
First, we present some background in sensitivity analysis, interpolation in Reproducing Kernel Hilbert Space (RKHS), and on the class of ANOVA kernels. 
Then we construct RKHS of zero mean functions and derive a subclass of ANOVA kernels that is well suited for the ANOVA representation of the best predictor and for global sensitivity analysis. 
The use of this new class of kernels is illustrated throughout the article on classical test functions from the sensitivity analysis literature. 

\subsection{Sensitivity analysis}

The purpose of global sensitivity analysis is to describe the inner structure of $f$ and to analyse the influence of each variable or group of variables on $f$~\cite{saltelli2008global}. A traditional approach is to study the variance of $f(\mathbf{X})$ where $\mathbf{X}$ is a random vector over $D$. Hereafter, we assume that $f \in L^2(D,\mu)$, where $D=D_1 \times  \dots \times D_d$ is a Cartesian product space of sets $D_i \subset \mathds{R}$ and where $\mu=\mu_1 \otimes \dots \otimes \mu_d$ is a product of probability measures over $D_i$. The measure $\mu$ describes the variability of the inputs and we define $\mathbf{X}$ as a random vector with probability distribution $\mu$. Note that although this framework is often considered in sensitivity analysis, the product structure of $\mu$ implies that the components of $\mathbf{X}$ are independent.

\medskip

For $d=1$, any $g \in L^2(D,\mu)$ can be decomposed as a sum of a constant plus a zero mean function~\cite{gu2002smoothing},
\begin{equation*}
g =  \int_D g(s) \dx{s}  + \left(g - \int_D g(s) \dx{s}  \right).
\end{equation*}
\noindent
The two elements of this decomposition are orthogonal for the usual $L^2$ scalar product so we have a geometric decomposition of $L^2(D,\mu)$:
\begin{equation}
\begin{split}
L^2(D,\mu) & = L^2_1(D,\mu) \stackrel{\perp}{\oplus} L^2_0(D,\mu)
\end{split}
\label{eq:decL21}
\end{equation}
\noindent
where $L^2_1(D,\mu)$ denotes the subspace of constant functions and $L^2_0(D,\mu)$ the subspace of zero mean functions: $L^2_0(D,\mu) = \{g \in L^2(D,\mu) : \int_D g(s) \dx{s} =0 \}$.

\medskip

Similarly, if $d>1$, the space $L^2(D,\mu)$ has a tensor product structure~\cite{Kree74} 
\begin{equation}
L^2(D,\mu) = \bigotimes_{i=1}^d L^2(D_i,\mu_i). 
\label{eq:L2dec}
\end{equation}
Using Eq.~\ref{eq:decL21} and the notation $L^2_{B}(D,\mu)= \bigotimes_{i=1}^d L^2_{B_i}(D_i,\mu_i)$ for $B \in \{ 0,1\}^d$ we obtain
\begin{equation}
L^2(D,\mu)  = \bigotimes_{i=1}^d \left ( L^2_1(D_i,\mu_i) \stackrel{\perp}{\oplus} L^2_0(D_i,\mu_i) \right ) = \bigoplus_{B \in \{ 0,1\}^d}^\perp L^2_{B}(D,\mu).
\label{eq:L2dectot}
\end{equation}
A key property is that two subspaces $L^2_{B}$ and $L^2_{B'}$ are orthogonal whenever $B \neq B'$.
Given an arbitrary function $f \in L^2(D,\mu)$, the orthogonal projection of $f$ onto those subspaces leads to the functional ANOVA representation~\cite{Efron1981,Sobol2001} (or \textit{Hoeffding-Sobol decomposition}) of $f$ into main effects and interactions:
\begin{equation}
f(\mathbf{x}) = f_0 + \sum_{i=1}^d f_i(x_i) + \sum_{i < j} f_{i,j}(x_i,x_j) + \dots + f_{1,\dots,d}(\mathbf{x}),
\label{eq:decf}
\end{equation}
\noindent
where $f_0$ is the orthogonal projection of $f$ onto the space of constant functions $L^2_{\{0 \}^d}(D,\mu)$ and where $f_{I}$ ($I \subset \{1, \ldots, d\}$) is the orthogonal projection of $f$ onto $L^2_{\mathrm{ind}(I)}(D,\mu)$ with $\mathrm{ind}(I)_i = 1$ if $i \in I$ and $\mathrm{ind}(I)_i=0$ if $i \notin I$. By construction, the integral of $f_{I}$ with respect to any of the variables indexed by $i\in I$ is zero. This decomposition gives an insight into the influence of each set of variables $\mathbf{x}_I=\{x_i, i \in I \}$ on $f$. For the constant term, the main effects, and the two-factor interactions, one gets the classical expressions~\cite{gu2002smoothing}:
\begin{equation}
\begin{split}
f_0 & = \int_D f(\mathbf{x}) \dx{\mathbf{x}} \\
f_i(x_i) & = \int_{D_{-i}} f(\mathbf{x}) \dix{\mathbf{x}_{-i}}{-i} - f_0 \\
f_{i,j}(x_i,x_j) & = \int_{D_{-\{i,j\}}} f(\mathbf{x}) \dix{\mathbf{x}_{-\{i,j\}}}{-\{i,j\}} - f_i(x_i) - f_j(x_j) - f_0
\end{split} 
\label{eq:expdecf}
\end{equation}
with the notations $D_{-I}=\prod_{i\notin I}D_{i}$ and $\mu_{-I}=\bigotimes_{i \notin I}\mu_{i}$. As one can see from Eq.~\ref{eq:expdecf}, the calculation of any $f_I$ requires having recursively computed all the $f_{J}$'s for $J \subset I$, which makes it cumbersome (if not practically impossible) to get high order interactions.  

\medskip

Recall the random vector $\mathbf{X}$ with distribution $\mu$. $L^2(D,\mu)$-orthogonality between any two terms of the decomposition implies that the variance of the random variable $f(\mathbf{X})$ can be decomposed as
\begin{equation}
\V(f(\mathbf{X})) = \sum_{i=1}^d \V(f_i(X_i)) + \sum_{i<j} \V(f_{i,j}(\mathbf{X}_{i,j})) + \dots + \V(f_{1,\dots,d}(\mathbf{X})).
\label{eq:decvar1}
\end{equation} 
For the subsets $I \subset \{1, \ldots, d\}$, the global sensitivity indices $S_I$ (also called Sobol indices) are then defined as
\begin{equation}
S_I= \frac{\V(f_I(\mathbf{X}_I))}{\V(f(\mathbf{X}))}.
\label{eq:Sind}
\end{equation}
$S_I$ represents the proportion of variance of $f(\mathbf{X})$ explained by the interaction between the variables indexed by $I$. The knowledge of the indices $S_I$ is very helpful for understanding the influence of the inputs, but the computation of the $f_I$'s is cumbersome when evaluation of $f$ is costly since they rely on the computation of the integrals of Eq.~\ref{eq:expdecf}. Following \cite{Marrel2009}, it can then be advantageous to perform the sensitivity analysis on a surrogate model $m$ approximating $f$. 

\subsection{Optimal approximation in RKHS}

% PHRASE introductive
\noindent
The class of functional approximation techniques considered in this work, commonly referred to as Kriging or Gaussian process regression in contemporary statistical learning settings, can be seen as methods of optimal interpolation in RKHS. $f$ is here assumed to be known at a set of points $\mathcal{X} = \{\mathcal{X}_1, \dots, \mathcal{X}_n\}$ with $ \mathcal{X}_i \in D$. Given a RKHS $\mathcal{H}$ of real-valued functions over $D$ with reproducing kernel $K(.,.)$, we want to select a function of $\mathcal{H}$ that is a good approximation of $f$. Two options are usually considered~\cite{Rasmussen2006}: the first one is to solve an interpolation problem, and the second a regularization problem. 

\medskip

The interpolation approach consists of finding a function $h \in \mathcal{H}$ that satisfies $h(\mathcal{X}_i)=f(\mathcal{X}_i)$ for $i=1,\dots,n$ (for which we will use the vectorial notation $h(\mathcal{X})=f(\mathcal{X})$). As there might be an infinite number of function of $\mathcal{H}$ that satisfy this criterion, the optimal interpolator is defined as~\cite{Rasmussen2006}:
\begin{equation}
m(\mathbf{x}) = \underset{h \in \mathcal{H}}{\mathrm{argmin}} \left( \N{h} : h(\mathcal{X})= f(\mathcal{X}) \right). 
\label{eq:krikri}
\end{equation}
When the Gram matrix $\mathrm{K}$ of general term $\mathrm{K}_{i,j} = K(\mathcal{X}_i,\mathcal{X}_j)$ is invertible, we obtain
\begin{equation}
m(\mathbf{x}) = \mathbf{k}(\mathbf{x})^t \mathrm{K}^{-1} \mathbf{F} 
\label{eq:krikrisol}
\end{equation}
where $\mathbf{F} = f(\mathcal{X})$ is the column vector of observations and $\mathbf{k}(.)$ is the column vector of functions $\left(K(\mathcal{X}_i,.)\right)_{1 \leq i \leq n}$.

\medskip

For the regularization approach, the best predictor $\tilde{m}$ is defined to be the minimizer of:
\begin{equation}
\sum_{i=1}^n (h(\mathcal{X}_i) - F_i)^2 + \lambda \N{h}^2,
\label{eq:krikrireg}
\end{equation}
where the parameter $\lambda $ is a positive number tuning the trade-off between the norm of $\tilde{m}$ and the distance to the observations. The solution of this minimization problem is
\begin{equation}
\tilde{m}(\mathbf{x}) = \mathbf{k}(\mathbf{x})^t (\mathrm{K}+ \lambda \mathrm{I})^{-1} \mathbf{F}.
\label{eq:krikriregsol}
\end{equation}
From the probabilistic point of view, the regularization corresponds to approximating $f$ based on noisy observations : $f_{obs}(\mathcal{X}_i) = f(\mathcal{X}_i) + \varepsilon_i$ where the $\varepsilon_i$ are independent random variables with distribution $\mathcal{N}(0,\lambda)$~\cite{Rasmussen2006}. 

\medskip

As it appears in Eqs~\ref{eq:krikrisol} and~\ref{eq:krikriregsol}, $m$ and $\tilde{m}$ are a linear combinations of the $K(\mathcal{X}_i,.)$. These basis functions and the influence of the parameter $\lambda$ are illustrated in Figure~\ref{fig:toy} on a toy example for two popular kernels: 
\begin{equation}
k_b(x,y)=\mathrm{min}(x,y) \text{\qquad and \qquad} k_g(x,y)=\mathrm{exp}\left(-(x-y)^2\right),
\label{eq:noytest}
\end{equation}
known respectively as the \textit{Brownian} and the \textit{Gaussian} covariance kernels. 
\begin{figure}[ht]
\centering
\subfigure[$k_b(\mathcal{X}_i,.)$]{
\includegraphics[width=4cm]{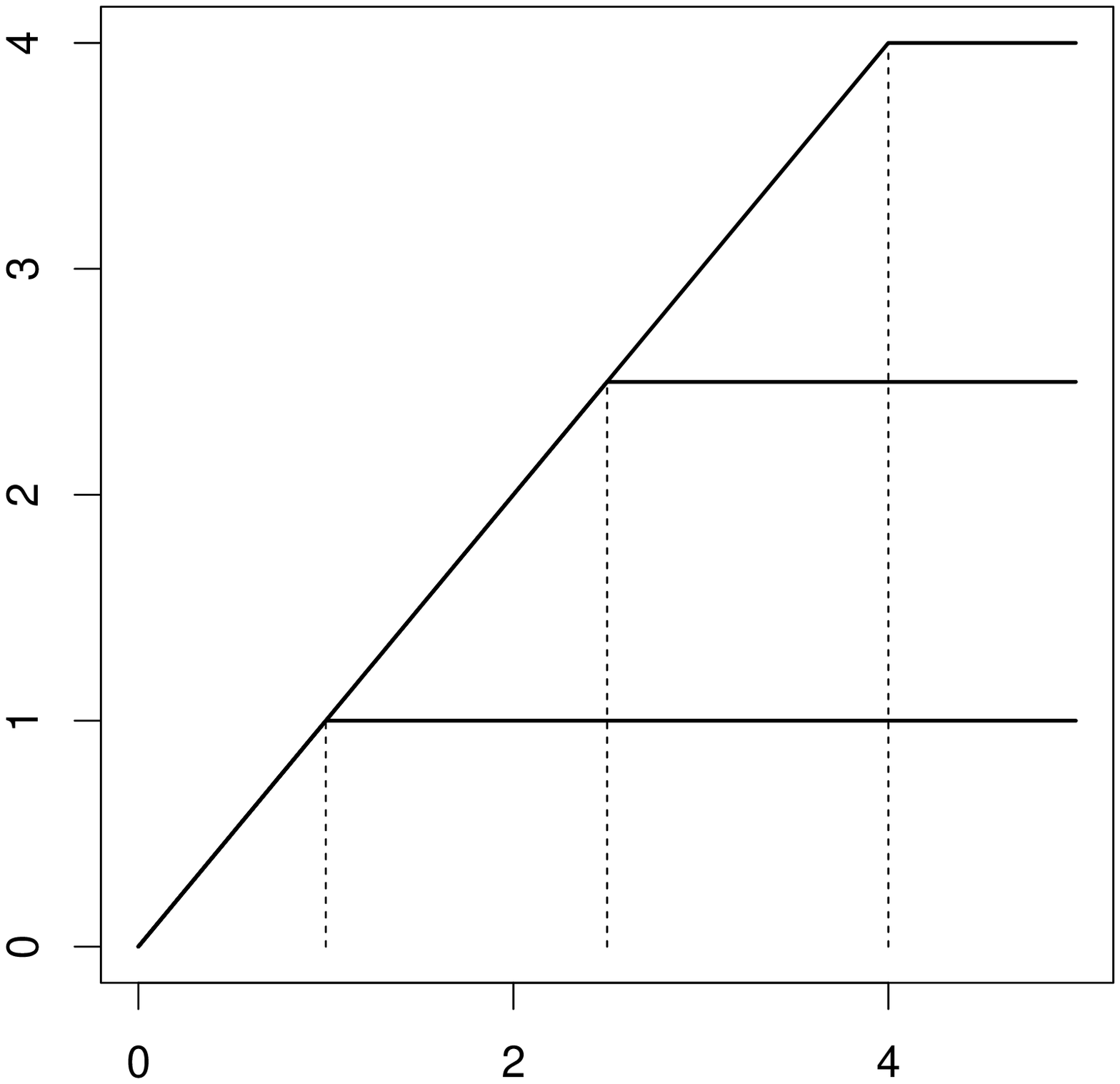}
\label{fig:s1fig1}
}
\subfigure[models based on $k_b$]{
\includegraphics[width=4cm]{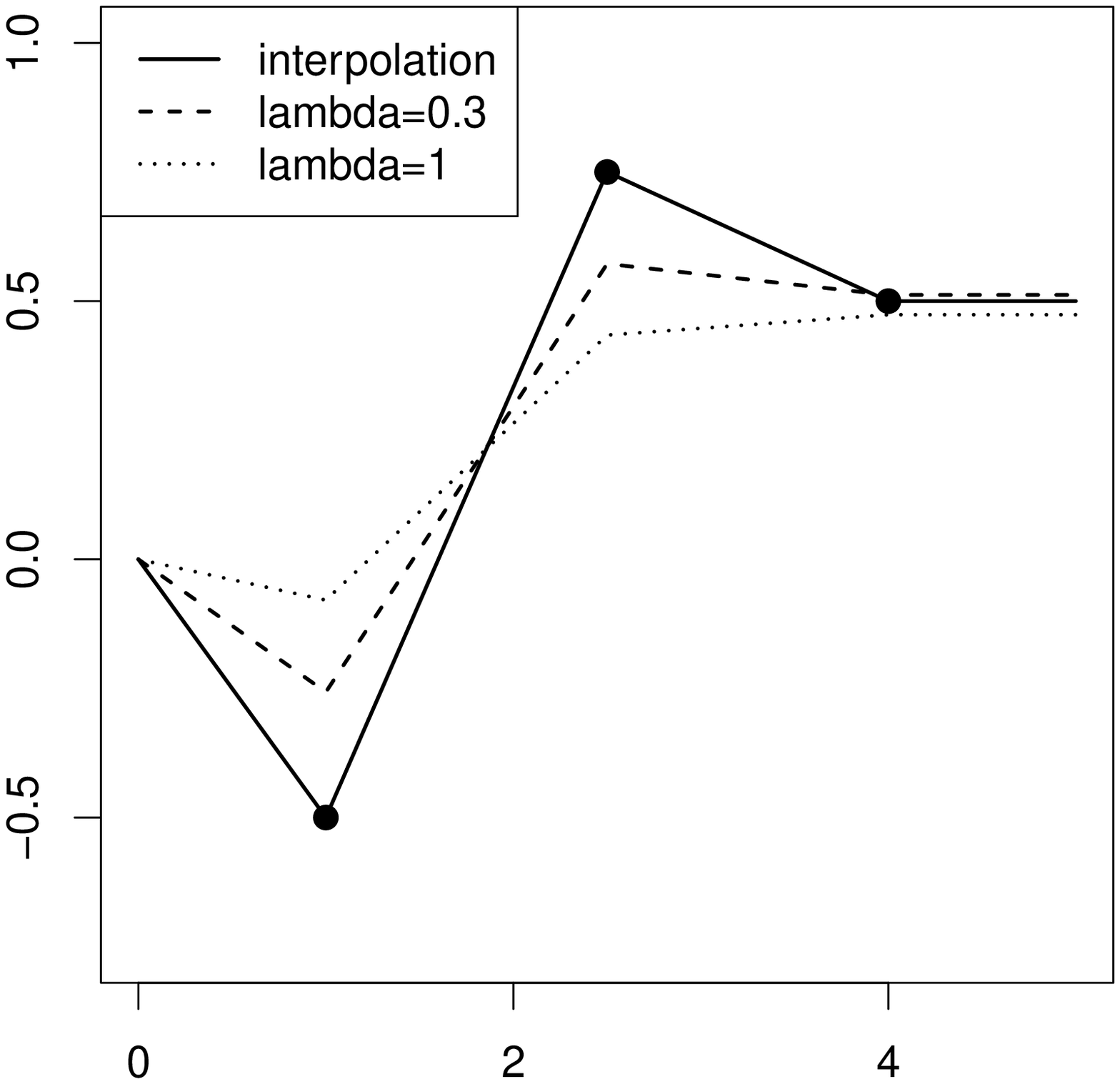}
\label{fig:s1fig2}
}
\subfigure[$k_g(\mathcal{X}_i,.)$]{
\includegraphics[width=4cm]{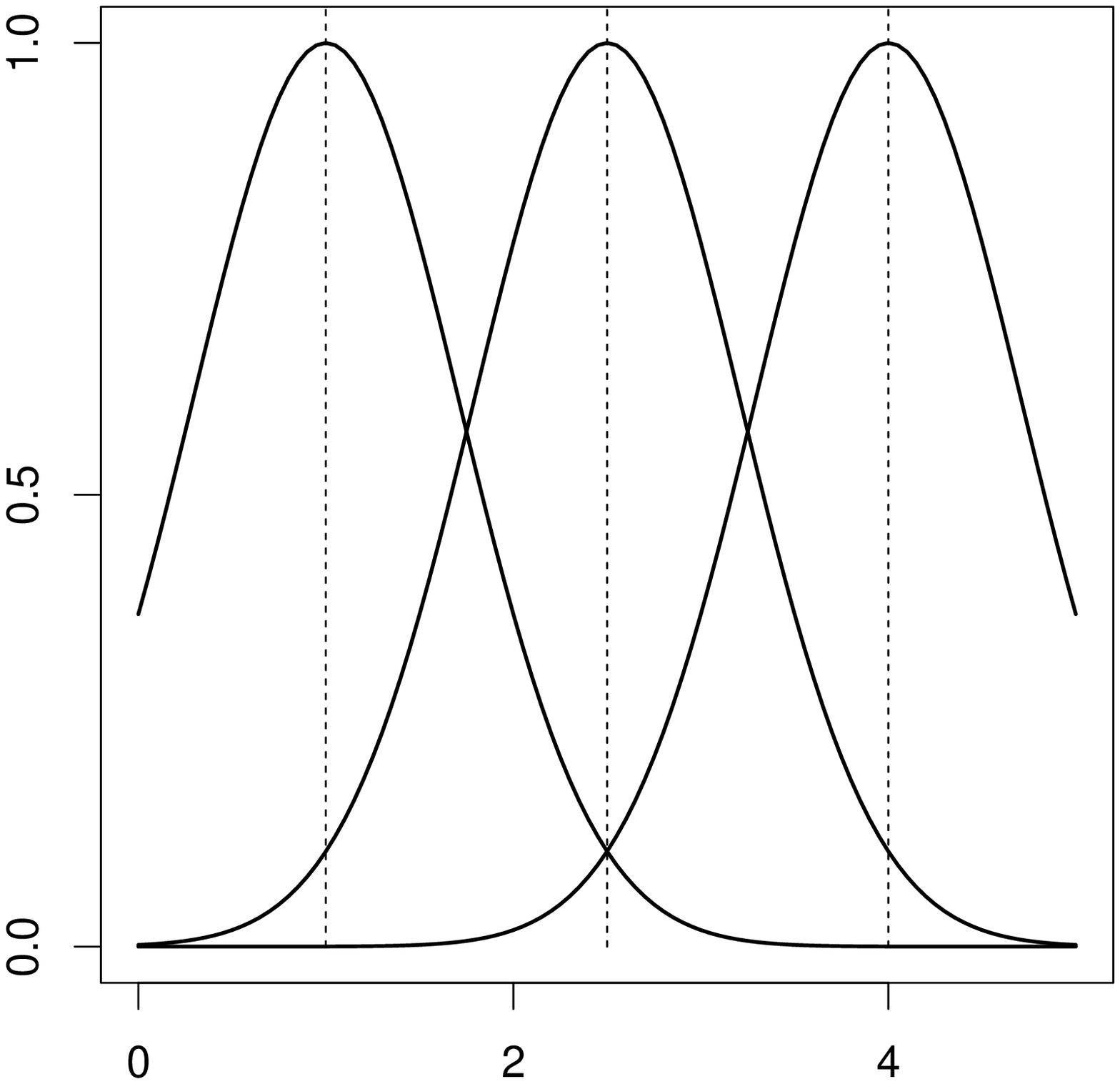}
\label{fig:s1fig3}
}
\subfigure[models based on $k_g$]{
\includegraphics[width=4cm]{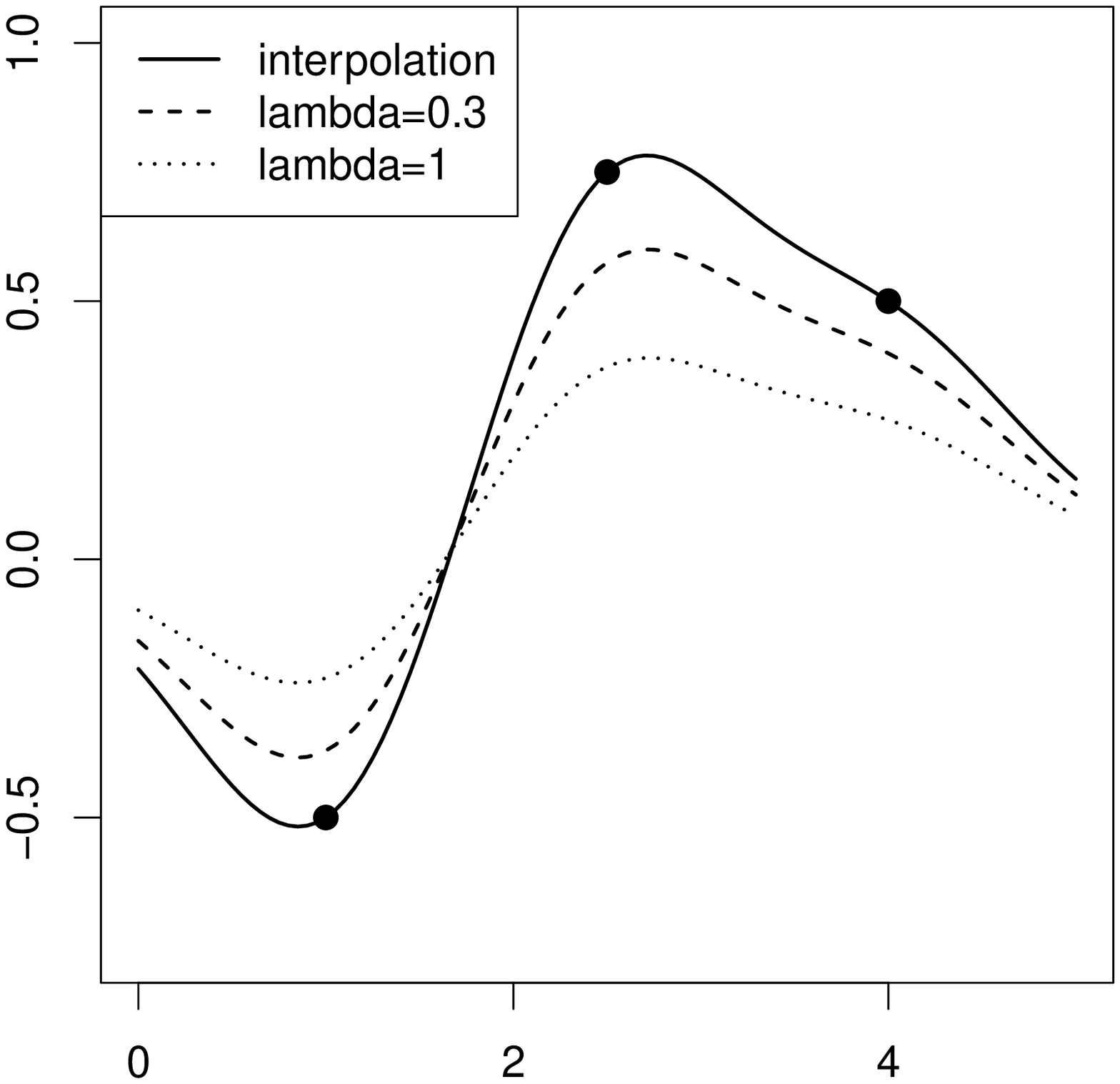}
\label{fig:s1fig4}
}
\caption{Examples of models obtained with kernels $k_b$ and $k_g$ with $D=[0,5]$. The set $\mathcal{X}=\{1,2.5,4 \}$ and the observations $\mathbf{F}=(-0.5,0.75,0.5)^t$ are chosen arbitrarily. The left panels represent the basis functions on which the model is based and the right panels show the obtained models $m$ and $\tilde{m}$ for two values of $\lambda$.}
\label{fig:toy}
\end{figure}

\medskip

The interpolation problem can be seen as a particular case of a regularization problem where $\lambda \rightarrow 0$~\cite{gu2002smoothing}. For the sake of clarity, we focus hereafter on the best interpolator $m$. However the general case of the regularization framework is addressed in Section~\ref{sec:reg}. 

\medskip

The kernel $K$ can be any symmetric positive definite (s.p.d.)\ function~\cite{vangeepuram2010reproducing} and it has to be chosen in practice. It is customary to arbitrarily select $K$ from usual s.p.d.\ functions (e.g.\ Mat\'ern, or power-exponential kernel families) according to some prior knowledge about $f$ such as its regularity. However, this choice has great impact on the resulting model (e.g.\ Figure~\ref{fig:toy}). If a kernel depends on some parameters, a traditional approach is to estimate them by maximum likelihood to ensure the kernel is adapted to the data~\cite{Rasmussen2006}.

\medskip

In the next section we focus on a particular family of s.p.d.\ functions called ANOVA kernels, which are designed to offer a separation of the terms with different orders of interaction. The main contribution of this paper, which is presented in the third section, deals with a special case of ANOVA kernels tailored for an improved disentanglement of multivariate effects. 

\subsection{ANOVA kernels and a candidate ANOVA-like decomposition of the best predictor}

The name ``ANOVA kernel'' has been introduced in the literature of machine learning by Stitson et Al.~\cite{Stitson1997} in the late 90s. These kernels allow control of the order of interaction in a model and enhance their interpretability~\cite{Gunn2002, Berlinet2004}. For $D=D_1 \times \dots \times D_d$, they are constructed as a product of univariate kernels $1+k^i$, where $1$ stands for a \textit{bias term} and the $k^i$'s are arbitrary s.p.d.\ kernels on $D_i \times D_i$ ($1\leq i \leq d$):
\begin{equation}
K_{ANOVA}(\mathbf{x},\mathbf{y}) = \prod_{i=1}^d (1 + k^i(x_i,y_i)) = 1 + \sum_{I \subset \{1,\dots,d \}} \prod_{i \in I} k^i(x_i,y_i).
\label{eq:KKANOVA}
\end{equation}
As shown on the right hand side of Eq.~\ref{eq:KKANOVA}, ANOVA kernels can also be seen as a sum of separable kernels with increasing interaction orders~\cite{duvenaud2011additive11}. 

\medskip

Denoting by $\mathds{1}^i$ and $\mathcal{H}^i$ the RKHS of functions defined over $D_i$ with respective reproducing kernels 1 and $k^i$, 
$K_{ANOVA}$ is the reproducing kernel of the space~\cite{Aronszajn1950,Berlinet2004}
\begin{equation}
\mathcal{H}_{ANOVA} = \bigotimes_{i=1}^d (\mathds{1}^i + \mathcal{H}^i) = \mathds{1} + \sum_{I \subset \{1,\dots,d \}} \mathcal{H}_I.
\label{eq:new}
\end{equation}
with $ \mathcal{H}_I = \bigotimes_{i \in I} \mathcal{H}^i \otimes \bigotimes_{i \notin I} \mathds{1}^i $.
\medskip

From Eq.~\ref{eq:KKANOVA} the particular structure of $K_{ANOVA}$ allows us to develop the $n \times 1$ vector $\mathbf{k}(\mathbf{x})$ of 
Eq.~\ref{eq:krikrisol} as follows:
\begin{equation}
\mathbf{k}(\mathbf{x}) = \mathbf{1} + \sum_{I \subset \{1,\dots,d \}} \bigodot_{i \in I} \mathbf{k}^i(x_i)
\label{eq:deck}
\end{equation}
where $\odot$ denotes a element-wise product: $ \left( \bigodot_{i \in I} \mathbf{k}^i(x_i) \right)_j = \prod_{i \in I}k^i(x_i,x_j)$. From this relation, we can get the decomposition of the best predictor $m$ given in Eq.~\ref{eq:krikrisol} onto the subspaces $\mathcal{H}_I$:
\begin{equation}
m(\mathbf{x})  %& = \mathrm{K}(\mathbf{x})^t \mathrm{K}^{-1} \mathbf{F}  \\
 = \mathbf{1}^t \mathrm{K}^{-1} \mathbf{F} + \sum_{I \subset \{1,\dots,d \}} \left ( \bigodot_{i \in I} \mathbf{k}^i(x_i) \right ) ^t \mathrm{K}^{-1} \mathbf{F}
\label{eq:mdec}
\end{equation}

Noting $m_0=\mathbf{1}^t \mathrm{K}^{-1} \mathbf{F}$ and $m_I(\mathbf{x})=\left ( \bigodot_{i \in I} \mathbf{k}^i(x_i) \right ) ^t \mathrm{K}^{-1} \mathbf{F}$,
we obtain an expression for $m$ which is similar to its ANOVA representation:
\begin{equation}
m(\mathbf{x}) = m_0 + \sum_{i=1}^d m_i(x_i) + \sum_{i<j} m_{i,j}(\mathbf{x}_{i,j}) + \dots + m_{1,\dots,d}(\mathbf{x}_{1,\dots,d}).
\label{eq:decm}
\end{equation}

In this expression, the $m_I $ have the useful feature of not requiring any recursive computation of integrals. However, this decomposition differs from the ANOVA representation of $m$ since the properties of Eq.~\ref{eq:decf} are not respected. Indeed, the $m_I$ of Eq.~\ref{eq:decm} are not necessarily zero mean functions and any two terms of the decomposition are generally not orthogonal in $L^2(D,\mu)$. This point is illustrated numerically on a test function at the end of the second example. It can be seen theoretically if the $k_i$ are Ornstein-Uhlenbeck kernels. In this case, it is known that $\mathds{1}^i \subset \mathcal{H}^i$~\cite{Antoniadis1984}. As all the $\mathcal{H}_I$ contain $\mathds{1}$ they are obviously not orthogonal.

\medskip

An alternative to avoid this issue is to consider RKHS such that for all $i$, $\mathcal{H}^i$ is $L^2$-orthogonal to the space of constant functions $\mathds{1}^i$ (i.e.\ the $\mathcal{H}^i$ are spaces of zero mean functions for $\mu_i$). This construction ensures that the decomposition of Eq.~\ref{eq:mdec} has the properties required in Eq.~\ref{eq:decf} so we benefit from the advantages of the two equations: the meaning of Eq.~\ref{eq:decf} for the analysis of variance; and the simplicity of computation for the $m_I$'s from Eq.~\ref{eq:mdec}. This approach has been applied to splines under the name of Smoothing Spline ANOVA (SS-ANOVA)~\cite{Wahba1995,gu2002smoothing}. However, SS-ANOVA is based on a decomposition of the inner product of a spline RKHS and it cannot be immediately applied for other RKHS. In the next section we introduce a general method for extracting RKHS of zero mean functions from any usual RKHS. As this approach is not limited to splines, it can be seen as an extension of the SS-ANOVA 
framework. 

%%%%%%%%%%%%%%%%%%%%%%%%%%%%%%%%%%%%%%%%%%%%%%%%%%%%%%%%%%%%%%%%%%%%%%%%%
%%%%%%%%%%%%%%%%%%%%%%%%%%%%%%%%%%%%%%%%%%%%%%%%%%%%%%%%%%%%%%%%%%%%%%%%%
\section{RKHS of zero mean functions}

\subsection{Decomposition of one-dimensional RKHS}
Let $\mathcal{H}$ be a RKHS of functions defined over a set $D \subset \mathds{R}$ with kernel $k$ and $\mu$ a finite Borel measure over $D$. Furthermore, we consider the hypothesis:
\begin{hyp}
\label{hyp1D1}
\begin{tabular}{ll}
 (i) & $k:\ D \times D \rightarrow \mathds{R}$ is $\mu \otimes \mu$-measurable. \\
 (ii) & $\displaystyle \int_D \sqrt{k(s,s)} \dx{s} < \infty$.
\end{tabular}
\end{hyp}
Note that when is $\mu$ a probability measure, any bounded kernel (e.g.\ Gaussian, power-exponential, Mat\'ern) satisfies the condition $(ii)$. Thus H\ref{hyp1D1} is not a very restrictive hypothesis. 

\begin{propo}
\label{prop1}
Under H\ref{hyp1D1}, $\mathcal{H}$ can be decomposed as a sum of two orthogonal sub-RKHS, $\mathcal{H}= \mathcal{H}_0 \stackrel{\perp}{\oplus} \mathcal{H}_1$ where $\mathcal{H}_0$ is a RKHS of zero-mean functions for $\mu$, and its orthogonal $\mathcal{H}_1$ is at most 1-dimensional.
\end{propo}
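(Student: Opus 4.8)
The plan is to obtain the decomposition from the Riesz representation of the integration functional. Introduce $T\colon\mathcal{H}\to\mathds{R}$ defined by $T(h)=\int_D h(s)\,\dx{s}$. The first step is to show that $T$ is well defined and continuous. For well-definedness I would use H\ref{hyp1D1}$(i)$ to check that every $h\in\mathcal{H}$ is $\mu$-measurable: joint measurability of $k$ makes each section $s\mapsto k(t,s)$ measurable, hence so is every finite linear combination $\sum_j a_j\,k(t_j,\cdot)$; since such combinations are dense in $\mathcal{H}$ and since norm convergence in a RKHS forces pointwise convergence --- because $|h_n(s)-h(s)|\le\N{h_n-h}\sqrt{k(s,s)}$ --- any $h\in\mathcal{H}$ is an everywhere pointwise limit of measurable functions, thus measurable. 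Continuity of $T$ then follows from the reproducing property and Cauchy--Schwarz: $|h(s)|=|\PS{h,k(s,\cdot)}|\le\N{h}\sqrt{k(s,s)}$, so $|T(h)|\le\int_D|h(s)|\,\dx{s}\le\N{h}\int_D\sqrt{k(s,s)}\,\dx{s}$, and the last integral is finite by H\ref{hyp1D1}$(ii)$.

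Once $T$ is a bounded linear form, the Riesz representation theorem provides a unique $\phi\in\mathcal{H}$ with $T(h)=\PS{h,\phi}$ for all $h\in\mathcal{H}$; evaluating at $h=k(s,\cdot)$ gives the explicit representer $\phi(s)=\int_D k(s,t)\,\dx{t}$. I would then set $\mathcal{H}_0=\ker T=\{h\in\mathcal{H}:\int_D h(s)\,\dx{s}=0\}$ and $\mathcal{H}_1=\mathcal{H}_0^{\perp}$. Continuity of $T$ makes $\mathcal{H}_0$ a closed subspace, so $\mathcal{H}=\mathcal{H}_0\stackrel{\perp}{\oplus}\mathcal{H}_1$ is merely the orthogonal splitting along $\ker T$; moreover $\mathcal{H}_1=(\ker T)^{\perp}=\mathrm{span}\{\phi\}$, which equals $\{0\}$ when $\phi=0$ and is one-dimensional otherwise, hence is at most $1$-dimensional.

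It remains to verify that the two summands are RKHS with the announced properties. A closed subspace of a RKHS is itself a RKHS, since its evaluation functionals are restrictions of the (bounded) evaluation functionals of $\mathcal{H}$; the same applies to $\mathcal{H}_1$. By construction every $h\in\mathcal{H}_0$ satisfies $\int_D h(s)\,\dx{s}=T(h)=0$, so $\mathcal{H}_0$ is a RKHS of zero-mean functions for $\mu$. If desired, one can also exhibit the reproducing kernels: when $\phi\neq0$, the kernel of $\mathcal{H}_1$ is $\phi(s)\phi(t)/\N{\phi}^2$ and that of $\mathcal{H}_0$ is $k$ minus this rank-one term. I expect the only genuinely delicate point to be the measurability argument underlying the well-definedness of $T$; everything after that is the standard Riesz representation and orthogonal-complement machinery.
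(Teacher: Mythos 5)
Your proposal is correct and follows essentially the same route as the paper: bound the integration functional via Cauchy--Schwarz and H1$(ii)$, invoke the Riesz representation theorem to obtain the representer $\phi(s)=\int_D k(s,t)\,\mathrm{d}\mu(t)$, and split $\mathcal{H}$ into $\ker T$ and its orthogonal complement $\mathrm{span}\{\phi\}$, distinguishing the cases $\phi=0$ and $\phi\neq 0$. Your explicit measurability argument for elements of $\mathcal{H}$ is a detail the paper leaves implicit under H1$(i)$, but it does not change the substance of the proof.
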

\begin{proof}
From H\ref{hyp1D1}, the integral operator $\displaystyle I : \mathcal{H} \rightarrow \mathds{R},\ h  \mapsto \int_D h(s) \dx{s}$
is bounded since
\begin{equation}
|I(h)|  \leq \int_D |\PS{h,k(s,.)}| \dx{s} \leq \N{h} \int_D \sqrt{k(s,s)} \dx{s} 
\label{eq:Icont}
\end{equation}
According to the Riesz representation theorem, there exists a unique $R \in \mathcal{H}$ such that $\forall h \in \mathcal{H}$, $I(h)= \PS{h,R}$. 
If $R(.) = 0$, then all $f \in \mathcal{H}$ are centered functions for $\mu$, so that $\mathcal{H}_{0}=\mathcal{H}$ and $\mathcal{H}_{1}=\{0\}$.
If $R(.) \neq 0$, then $\mathcal{H}_1=span(R)$ is a 1-dimensional sub-RKHS of $\mathcal{H}$, and the subspace $\mathcal{H}_0$ of centered functions 
for $\mu$ is characterized by $\mathcal{H}_0=\mathcal{H}_1^\perp$. 
%We thus have the orthogonal decomposition $\mathcal{H}= \mathcal{H}_0 \oplus \mathcal{H}_1$. 
%If $k$ is such that $R(.)=0$ then $\mathcal{H}$ is already a space of centered functions for $\mu$ so we obtain the trivial decomposition $\mathcal{H}= \{0\} \oplus \mathcal{H}$.
\end{proof}

For all $x \in D$ the value of $R(x)$ can be calculated explicitly. Indeed, recalling that $k(x,.)$ and $R$ are respectively the representers in $\mathcal{H}$ 
of the evaluation functional at $x$ and of the integral operator, we get:
\begin{equation}
R(x) = \PS{k(x,.),R} = I(k(x,.)) = \int_D k(x,s) \dx{s}.
\label{eq:RIx}
\end{equation}

The reproducing kernels $k_0$, $k_1$ of $\mathcal{H}_0$ and $\mathcal{H}_1$ satisfy $k=k_0 + k_1$. %$k(x,y) = k_0(x,y)+k_1(x,y)$. 
Let $\pi$ denote the orthogonal projection onto $\mathcal{H}_1$. Following~\cite[theorem 11]{Berlinet2004} we obtain
\begin{equation}
\begin{split}
k_0(x,y) &= k(x,y) - \pi(k(x,.))(y) \\
&= k(x,y) - \frac{\displaystyle \int_D k(x,s) \dx{s}  \int_D k(y,s) \dx{s}}{\displaystyle \iint_{D \times D} k(s,t) \dx{s} \dx{t}} \\
\end{split}
\label{eq:k1}
\end{equation}

\paragraph{Example 1}
Let us illustrate the previous results for the kernels $k_b$ and $k_g$ introduced in Eq.~\ref{eq:noytest}. As previously, we choose $D=[0,5]$ and define $\mu$ as the uniform measure over $D$. Following Proposition 1, $k_b$ and $k_g$ can be decomposed as a sum of two reproducing kernels:
\begin{equation}
\begin{split}
k_b(x,y) & = k_{b,0}(x,y) + k_{b,1}(x,y) \\
k_g(x,y) & = k_{g,0}(x,y) + k_{g,1}(x,y).
\end{split}
\end{equation}
Figure~\ref{fig:EX1} represents sections of $k_{b,i}(x,y)$ and $k_{g,i}(x,y)$ for various values of $y$. We observe from this figure that $k_{b,0}(.,y)$ and $k_{g,0}(.,y)$ are zero mean functions. Moreover, $k_{b,0}(.,y)$ and $k_{b,1}(.,y)$ (respectively $k_{g,0}(.,y)$, $k_{g,1}(.,y)$) are orthogonal for the inner product induced by $k_b$ (resp. $k_g$) but they are not orthogonal for $L^2(D,\mu)$.
\begin{figure}[ht]
\centering
\subfigure[$k_{b,i}(.,0)$]{
\includegraphics[width=3.5cm]{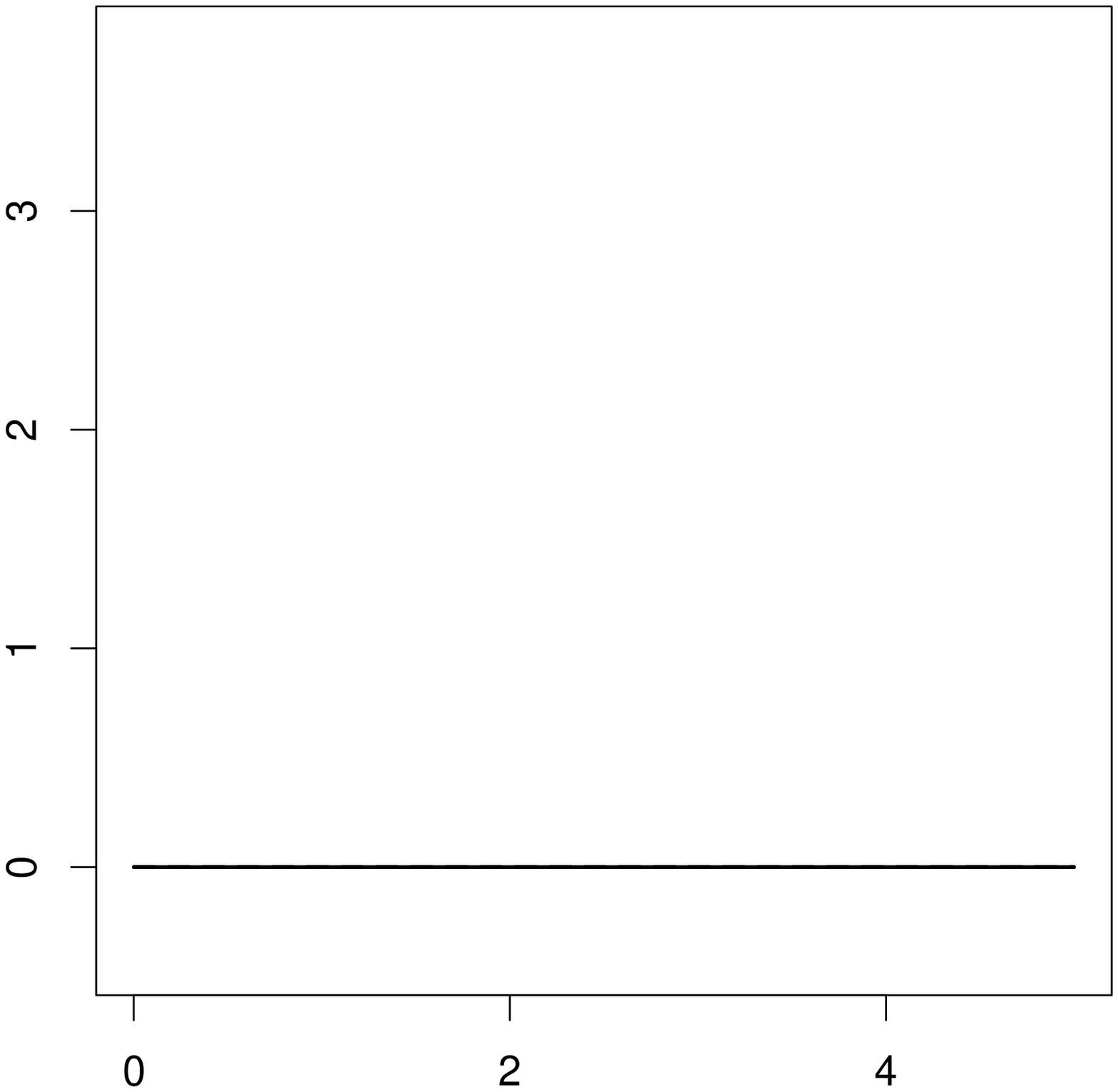}
\label{fig:sfig1}
}
\subfigure[$k_{b,i}(.,2)$]{
\includegraphics[width=3.5cm]{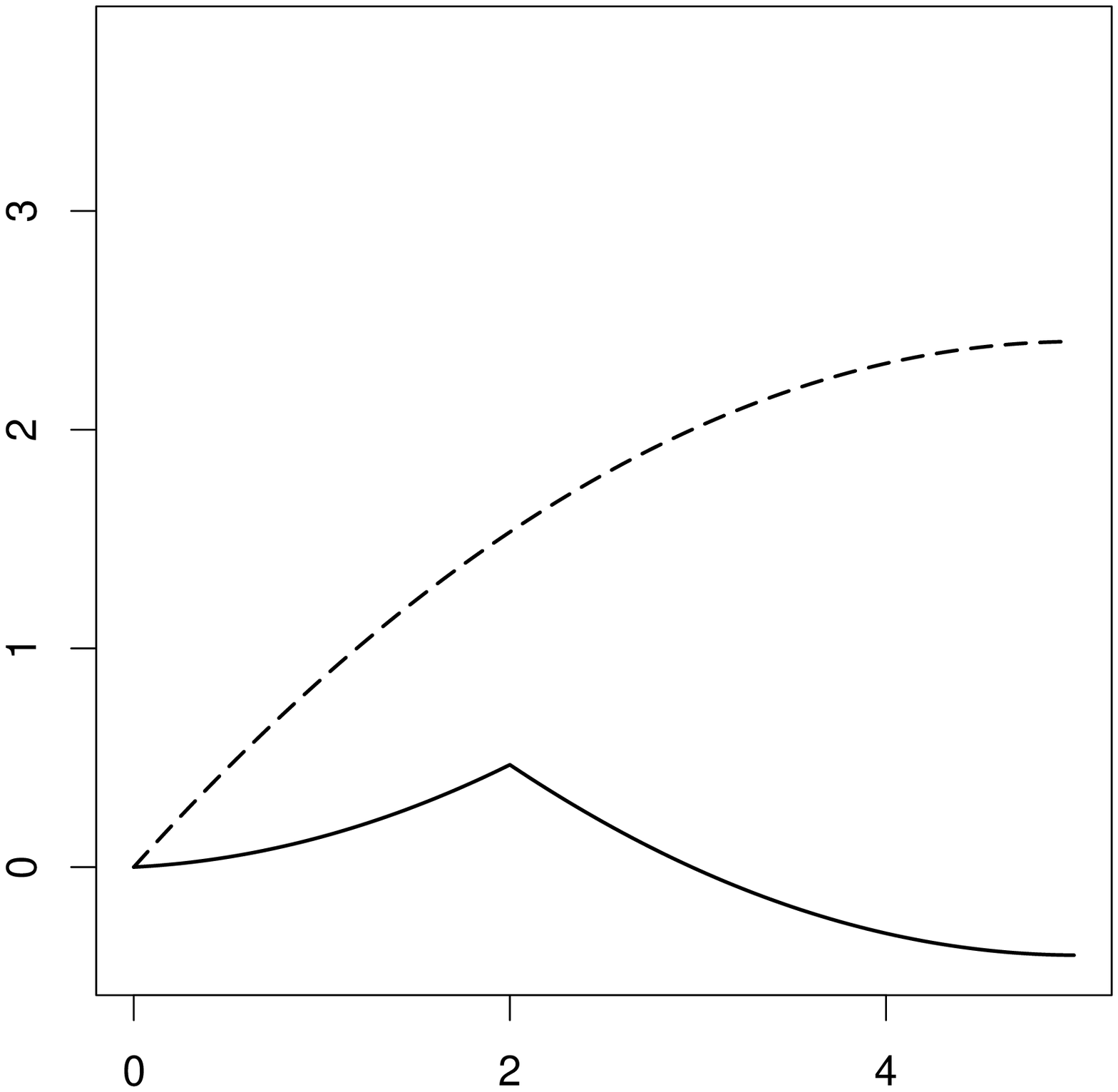}
\label{fig:sfig2}
}
\subfigure[$k_{b,i}(.,4)$]{
\includegraphics[width=3.5cm]{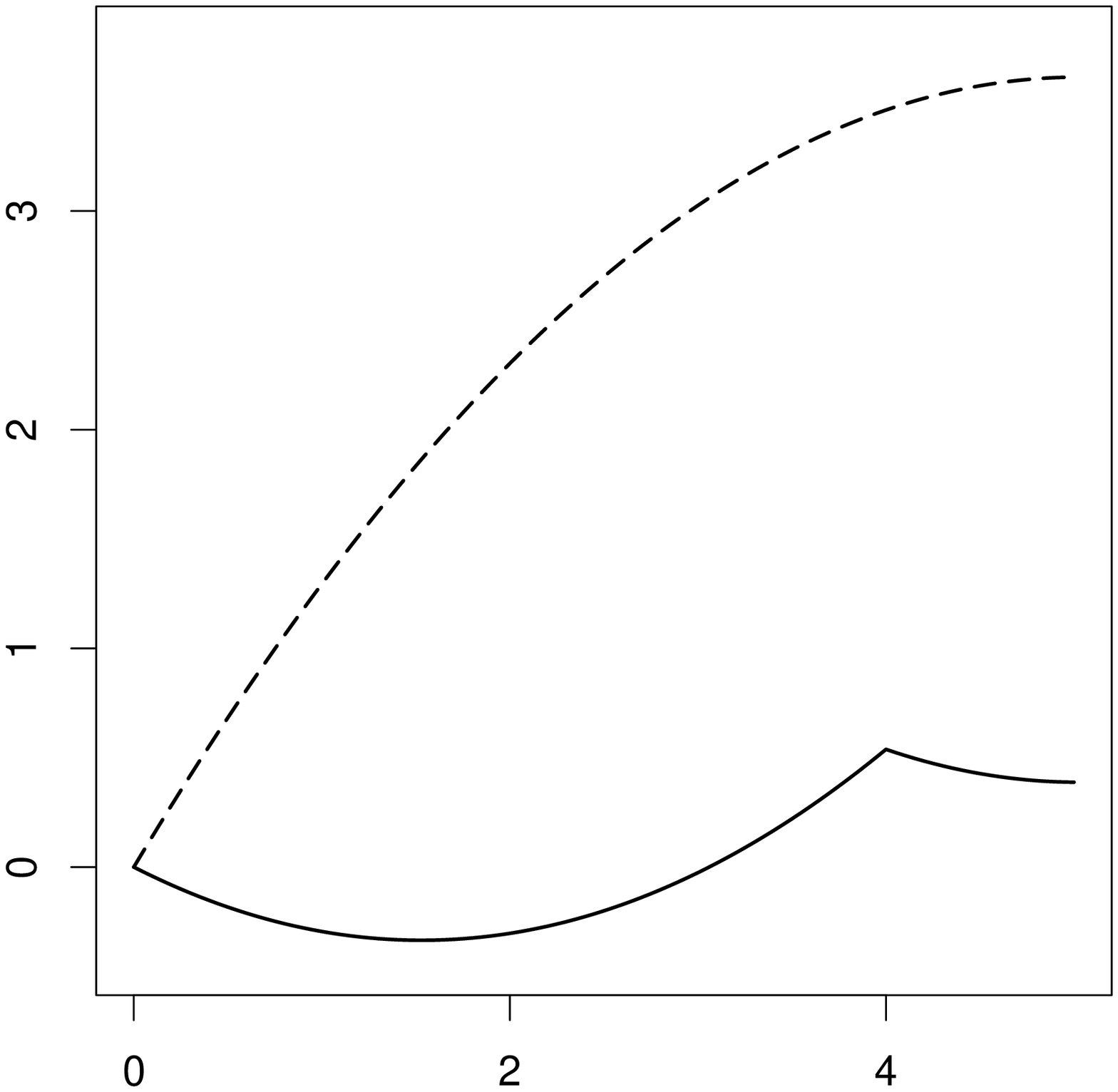}
\label{fig:sfig3}
}
\subfigure[$k_{g,i}(.,0)$]{
\includegraphics[width=3.5cm]{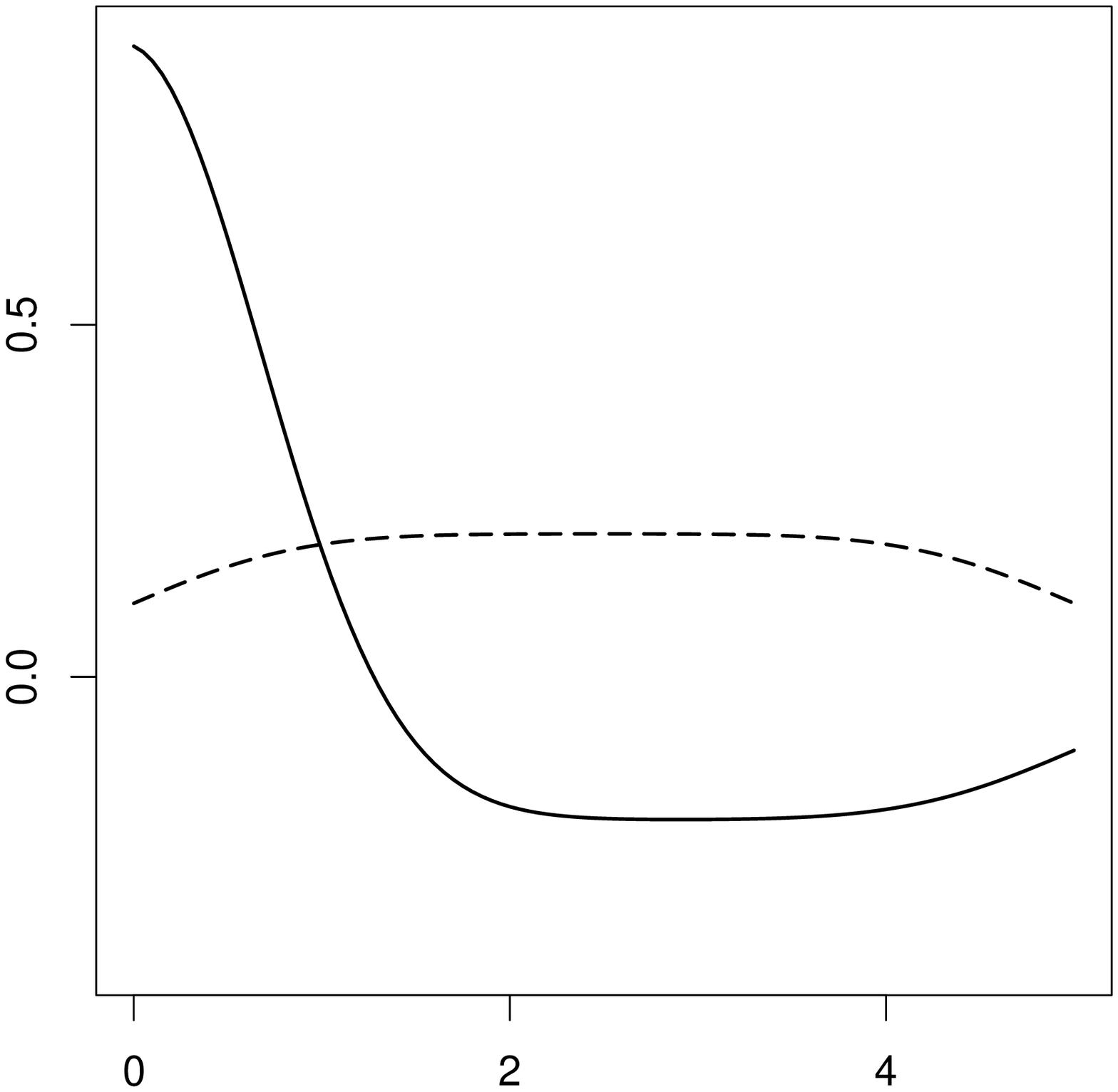}
\label{fig:sfig4}
}
\subfigure[$k_{g,i}(.,2)$]{
\includegraphics[width=3.5cm]{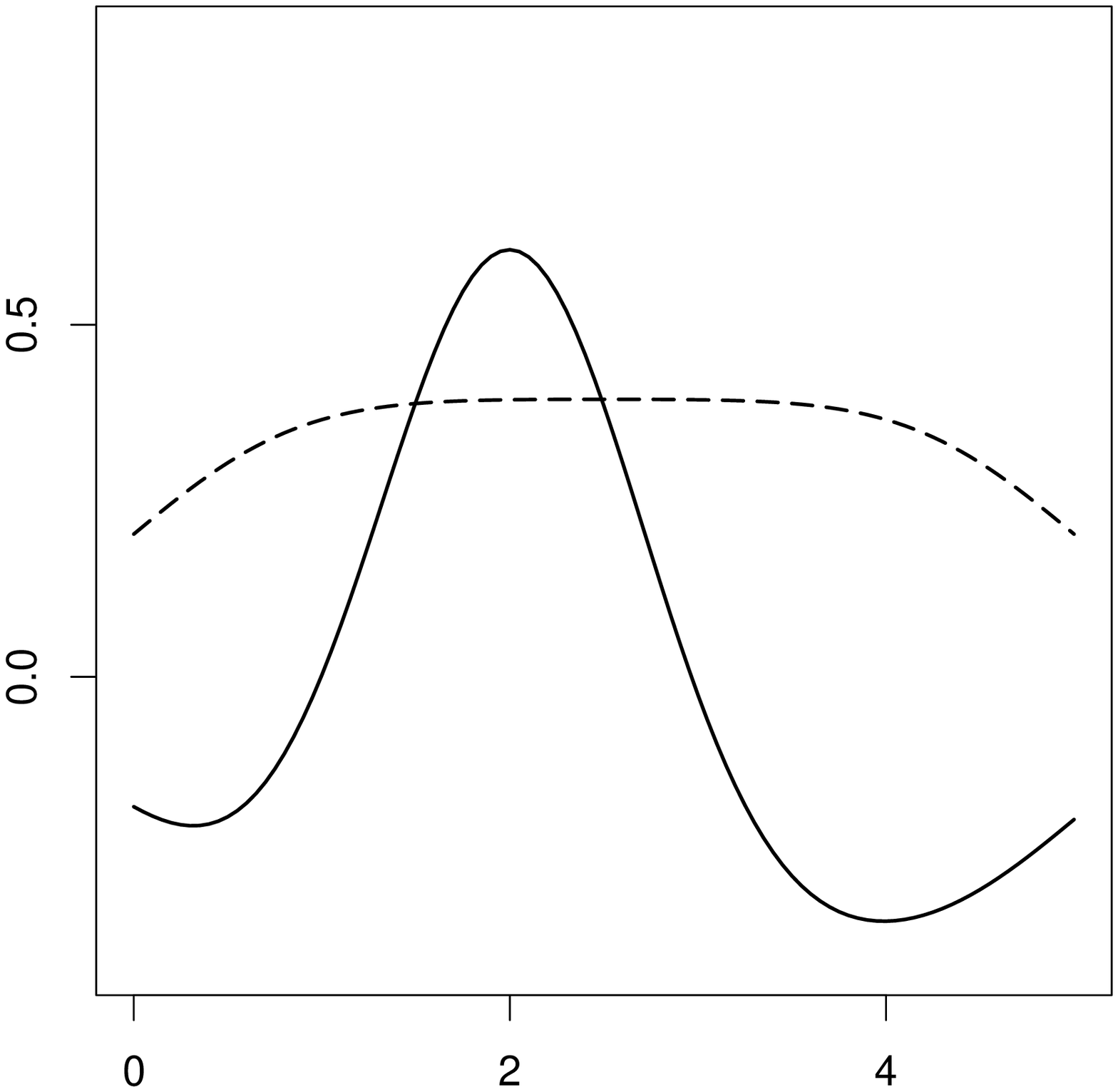}
\label{fig:sfig5}
}
\subfigure[$k_{g,i}(.,4)$]{
\includegraphics[width=3.5cm]{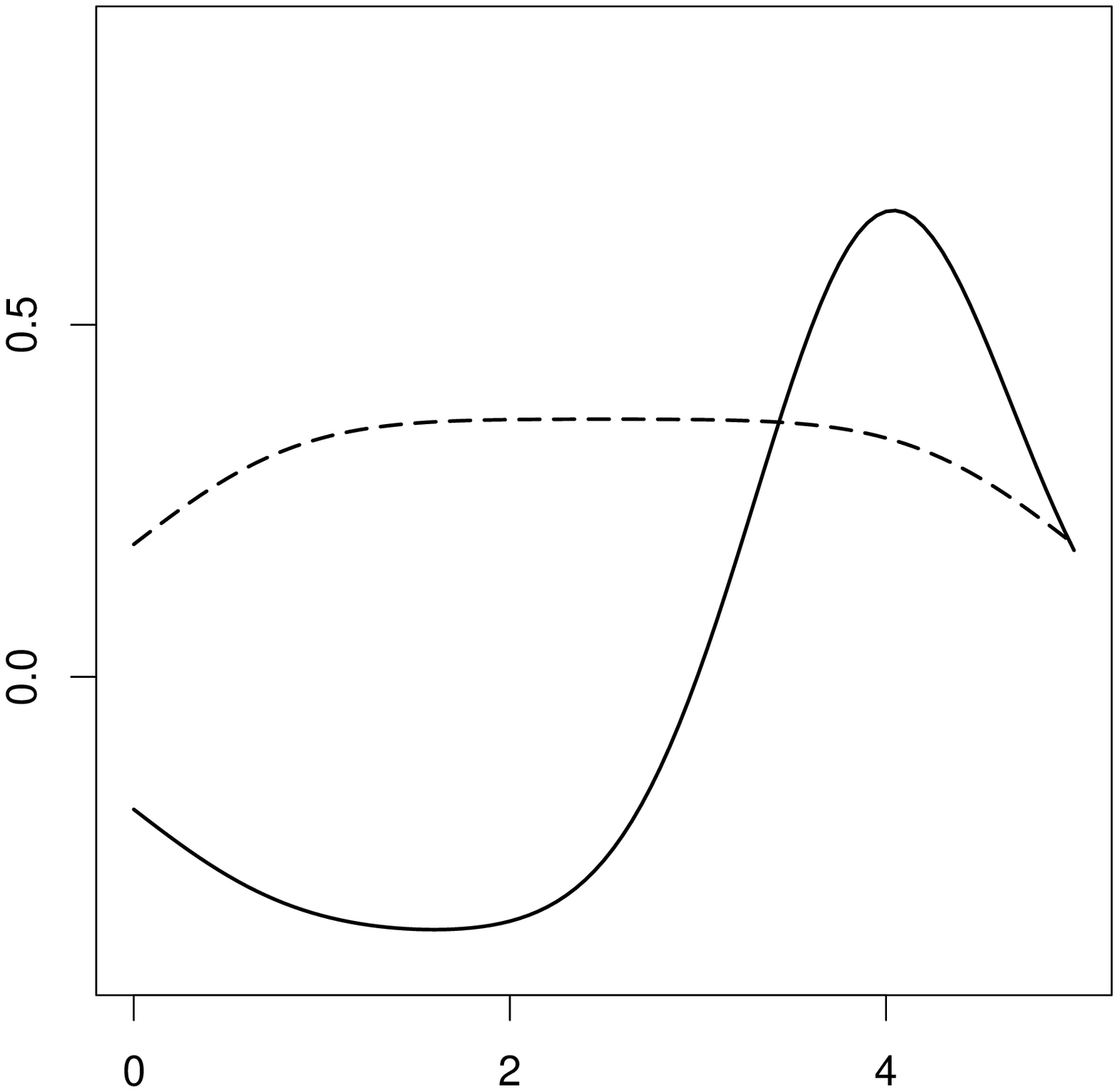}
\label{fig:sfig6}}
\caption[Optional caption for list of figures]{Representation of the kernels $k_{b,i}(.,y)$ and $k_{g,i}(.,y)$ for $y=0,2,4$ and $i=0,1$. The dashed lines correspond to $k_{b,1}$, $k_{g,1}$ and the solid lines are for $k_{b,0}$ and $k_{g,0}$. The original kernels $k_b(.,y)$ and $k_g(.,y)$ can be retrieved by summing the two curves.}
\label{fig:EX1}
\end{figure}

\subsection{Remarks on the numerical computation of \texorpdfstring{$k_0$}{k0} and \texorpdfstring{$k_1$}{k1}}
In the previous example, the integrals of Eq.~\ref{eq:k1} have been approximated using a Riemann sum operator $I_R$, and we discuss here the influence of this approximation. Let $\hat{k}_0$ be the approximation of $k_0$ using Riemann sums in Eq.~\ref{eq:k1}. By definition of a RKHS, the evaluation is linear continuous on $\mathcal{H}$ so the operator $I_R$ is also linear continuous and the pattern of the previous proof is still valid when $I$ is replaced by $I_R$. The function $\hat{k}_0$ corresponds in fact to Proposition~\ref{prop1} applied to the averaging operator $I_R$. We thus have the following result:

\begin{coro}[Corollary of Proposition 1:]
The function $\hat{k}_0$ is the reproducing kernel of the subspace of $\mathcal{H}$ of zero mean functions for $I_R$.
\end{coro}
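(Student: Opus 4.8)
The plan is to repeat the proof of Proposition~\ref{prop1} verbatim, with the integral operator $I$ replaced by the Riemann sum operator $I_R$, and to check that the only place where H\ref{hyp1D1} was used --- the boundedness of the operator --- is automatic for $I_R$. Concretely, a Riemann sum operator has the form $I_R(h) = \sum_{\ell=1}^N w_\ell\, h(t_\ell)$ for finitely many nodes $t_\ell \in D$ and weights $w_\ell \in \mathds{R}$. Each evaluation functional $h \mapsto h(t_\ell)$ is continuous on $\mathcal{H}$ by the very definition of a RKHS, with $|h(t_\ell)| = |\PS{h,k(t_\ell,.)}| \leq \N{h}\sqrt{k(t_\ell,t_\ell)}$; hence $I_R$ is a finite linear combination of bounded functionals, so it is itself bounded, no integrability hypothesis being required.

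First I would invoke the Riesz representation theorem to obtain a unique $\hat{R} \in \mathcal{H}$ with $I_R(h) = \PS{h,\hat{R}}$ for every $h \in \mathcal{H}$, and, exactly as in Eq.~\ref{eq:RIx}, identify it explicitly by applying $I_R$ to $k(x,.)$, namely $\hat{R}(x) = I_R(k(x,.)) = \sum_{\ell} w_\ell\, k(x,t_\ell)$. Then I would split into the two cases of the proof of Proposition~\ref{prop1}. If $\hat{R} = 0$, every $h \in \mathcal{H}$ satisfies $I_R(h)=0$, the subspace of zero-mean functions for $I_R$ is all of $\mathcal{H}$, and the formula defining $\hat{k}_0$ (Eq.~\ref{eq:k1} with $I$ replaced by $I_R$) degenerates to $\hat{k}_0 = k$, which is indeed its reproducing kernel. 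If $\hat{R} \neq 0$, set $\hat{\mathcal{H}}_1 = \mathrm{span}(\hat{R})$ and $\hat{\mathcal{H}}_0 = \hat{\mathcal{H}}_1^\perp$; since $\PS{h,\hat{R}} = I_R(h)$, the subspace $\hat{\mathcal{H}}_0$ is precisely $\{ h \in \mathcal{H} : I_R(h) = 0 \}$, that is, the zero-mean functions for $I_R$.

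It then remains to identify the reproducing kernel of $\hat{\mathcal{H}}_0$. Applying the same result of~\cite[theorem 11]{Berlinet2004} used to derive Eq.~\ref{eq:k1}, the kernel of the orthogonal complement of the one-dimensional space $\mathrm{span}(\hat{R})$ is $k(x,y) - \pi(k(x,.))(y)$ with $\pi$ the orthogonal projection onto $\mathrm{span}(\hat{R})$, which equals
\begin{equation*}
k(x,y) - \frac{\hat{R}(x)\,\hat{R}(y)}{\N{\hat{R}}^2} = k(x,y) - \frac{I_R(k(x,.))\, I_R(k(y,.))}{(I_R \otimes I_R)(k)}.
\end{equation*}
This is exactly the expression obtained by substituting $I_R$ for $I$ in Eq.~\ref{eq:k1}, i.e.\ the definition of $\hat{k}_0$; hence $\hat{k}_0$ reproduces $\hat{\mathcal{H}}_0$, the subspace of $\mathcal{H}$ of zero-mean functions for $I_R$, which is the claim.

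I do not expect any genuine obstacle: the content of the corollary is precisely that the proof of Proposition~\ref{prop1} never used the integral nature of $I$, only its continuity. The one point worth stating carefully is that continuity of $I_R$ is free from the RKHS axioms (so the analogue of H\ref{hyp1D1} is vacuous for a Riemann sum), together with the bookkeeping remark that ``zero mean for $I_R$'' literally means lying in $\ker I_R = \hat{\mathcal{H}}_1^\perp$.
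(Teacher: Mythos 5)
Your proof is correct and takes essentially the same route as the paper: the paper's own justification is precisely that $I_R$, being a finite combination of evaluation functionals, is automatically bounded on $\mathcal{H}$, so the Riesz-representer argument of Proposition 1 applies verbatim with $I$ replaced by $I_R$. You merely spell out the details (explicit representer, the degenerate case, and the projection formula) that the paper leaves implicit.
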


This ensures that the approximation of Eq.~\ref{eq:k1} by Riemann sums is still a s.p.d.\ function. For all $x \in [0,5]$ we thus have $I_R(\hat{k}_0(x,.))=0$ and $I(k_0(x,.))=0$, but $I_R(k_0(x,.))$ and $I(\hat{k}_0(x,.))$ are generally not equal to zero. 

\medskip

Note that this remark about the Riemann sum is not valid for Monte Carlo approximations of integrals. Indeed, Monte Carlo integration has random output and the use of such methods in Eq.~\ref{eq:k1} leads to a random function that cannot be interpreted as a kernel (for example the obtained function is not symmetric with probability one).

\medskip

The kernel $k_b$ can be used to compare the error induced by replacing $I$ by $I_R$ since the integrals of Eq.~\ref{eq:k1} can be computed analytically. For $I_R$ based on a set of 100 points equally spaced over $[0, 5]$, the maximum distance we observed between $k_{b,0}$ and $\hat{k}_{b,0}$ is lower than $0.02$ so the two functions would be indistinguishable in Figure~\ref{fig:EX1}. Hereafter, the integrals will always be computed using Riemann sum approximations and we will not differentiate $k_0$ and $\hat{k}_0$ any more.

%%%%%%%%%%%%%%%%%%%%%%%%%%%%%%%%%%%%%%%%%%%%%%%%%%%%%%%%%%%%%%%%%%%%%%%%%%%%%%%%%%%
\subsection{Generalization for multi-dimensional RKHS}
The former decomposition of one-dimensional kernels leads directly to the decomposition of tensor product kernels if the $k^i$'s satisfy H\ref{hyp1D1}:
\begin{equation}
K(\mathbf{x},\mathbf{y}) = \prod_{i=1}^d k^i(x_i,y_i) = \prod_{i=1}^d (k^i_0(x_i,y_i) + k^i_1(x_i,y_i)).
\label{eq:KAD}
\end{equation}
Since $k^i_0(x_i,y_i)$ is a kernel of a one-dimensional RKHS, its effect is similar to the constant function $1$ in an ANOVA kernel. This decomposition highlights the similarity between usual tensor product kernels (power-exponential, Brownian, Mat\'ern...) and ANOVA kernels. 

%%%%%%%%%%%%%%%%%%%%%%%%%%%%%%%%%%%%%%%%%%%%%%%%%%%%%%%%%%%%%%%%%%%%%%%%%%%%%%%%%%%
%%%%%%%%%%%%%%%%%%%%%%%%%%%%%%%%%%%%%%%%%%%%%%%%%%%%%%%%%%%%%%%%%%%%%%%%%%%%%%%%%%%
\section{A new class of kernels for sensitivity analysis}

We now propose a special class of ANOVA kernels:
\begin{equation}
K_{ANOVA}^*(\mathbf{x},\mathbf{y}) = \prod_{i=1}^d (1 + k^i_0(x_i,y_i)) = 1 + \sum_{I \subset \{1,\dots,d \}} \prod_{i \in I} k^i_0(x_i,y_i),
\label{eq:KANOVA}
\end{equation}
where the $k^i_0$ are obtained by decomposing kernels as in the previous section. We now develop the use of such kernels for the ANOVA representation of the best predictor and for the computation of global sensitivity indices.

%%%%%%%%%%%%%%%%%%%%%%%%%%%%%%%%%%%%%%%%%%%%%%%%%%%%%%%%%%%
\subsection{ANOVA representation of \texorpdfstring{$m$}{m}}
\begin{propo}
\label{prop2}
If $m$ is a best predictor based on $K_{ANOVA}^*$, 
\begin{equation}
m_I = \left( \bigodot_{i \in I} \mathbf{k}^i_0(x_i) \right)^t \mathrm{K}^{-1} \mathbf{F}
\label{eq:propo2}
\end{equation}
is the term of the functional ANOVA representation of $m$ indexed by $I$. Hence, the decomposition of $m$ given by Eq.~\ref{eq:decm} coincides with its functional ANOVA representation (Eq.~\ref{eq:decf}). 
\end{propo}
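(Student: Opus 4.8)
The plan is to verify that the candidate decomposition $m = m_0 + \sum_{I \subset \{1,\dots,d\}} m_I$ supplied by Eqs.~\ref{eq:mdec}--\ref{eq:decm} satisfies the two defining properties of the Hoeffding--Sobol decomposition of Eq.~\ref{eq:decf}: each $m_I$ with $I\neq\emptyset$ has zero integral with respect to every variable it depends on, and the summands are pairwise orthogonal in $L^2(D,\mu)$. Uniqueness of that decomposition then forces $m_I$ to be exactly the ANOVA term indexed by $I$.

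First I would note that, exactly as in Eqs.~\ref{eq:deck}--\ref{eq:mdec} but with $K_{ANOVA}$ replaced by $K_{ANOVA}^*$, expanding $\prod_{i=1}^d\bigl(1+k^i_0(x_i,y_i)\bigr)$ yields $m(\mathbf{x}) = \mathbf{1}^t \mathrm{K}^{-1}\mathbf{F} + \sum_{I}\bigl(\bigodot_{i\in I}\mathbf{k}^i_0(x_i)\bigr)^t\mathrm{K}^{-1}\mathbf{F}$, so $m_I$ is a finite linear combination of the functions $\mathbf{x}\mapsto\prod_{i\in I}k^i_0\bigl(x_i,(\mathcal{X}_j)_i\bigr)$. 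Each of these is a tensor product of the sections $k^i_0(\cdot,(\mathcal{X}_j)_i)\in\mathcal{H}^i_0$ for $i\in I$ with the constant function for $i\notin I$, hence $m_I\in\mathcal{H}_I=\bigotimes_{i\in I}\mathcal{H}^i_0\otimes\bigotimes_{i\notin I}\mathds{1}^i$.

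The crucial step invokes Proposition~\ref{prop1}: for each $i$, $\mathcal{H}^i_0$ consists of $\mu_i$-centered functions, so its reproducing kernel satisfies $\int_{D_i}k^i_0(x_i,y)\dix{x_i}{i}=0$ for every $y\in D_i$. Together with the product structure of $\mu$ and Fubini's theorem — applicable under H\ref{hyp1D1}, which makes each $k^i_0$ integrable — this gives $\int_{D_i}m_I(\mathbf{x}_I)\dix{x_i}{i}=0$ for every $i\in I$, i.e.\ the vanishing-integral property of Eq.~\ref{eq:decf}. For orthogonality, let $I\neq I'$ and choose $i$ in their symmetric difference, say $i\in I\setminus I'$; integrating $m_I m_{I'}$ first over $x_i$, and using that $m_{I'}$ is constant in $x_i$ while $\int_{D_i}m_I\dix{x_i}{i}=0$, shows $\langle m_I,m_{I'}\rangle_{L^2(D,\mu)}=0$. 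The same argument with $I'=\emptyset$ (so $m_{I'}=m_0$) handles orthogonality to the constant term.

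Having displayed $m$ as a sum of terms $m_I\in\mathcal{H}_I$ that are pairwise $L^2(D,\mu)$-orthogonal and centered with respect to each of their own variables, I would conclude by the uniqueness of the functional ANOVA representation (Eqs.~\ref{eq:decf}--\ref{eq:expdecf}): these are exactly its characterizing properties, so $m_I$ equals the orthogonal projection of $m$ onto $L^2_{\mathrm{ind}(I)}(D,\mu)$, and Eq.~\ref{eq:decm} therefore coincides with the Hoeffding--Sobol decomposition of $m$. I expect the only delicate point to be the rigorous justification that the sections of $k^i_0$ are $\mu_i$-centered and that the Fubini interchange propagates this cleanly through the tensor product; the combinatorial expansion of $\mathbf{k}(\mathbf{x})$ is already available from the general ANOVA-kernel computation in Eqs.~\ref{eq:deck}--\ref{eq:decm}.
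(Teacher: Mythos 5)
Your proof is correct and follows essentially the same route as the paper: both rest on Proposition~\ref{prop1} giving $\mathds{1}^i \perp_{L^2} \mathcal{H}^i_0$, the tensor-product structure of $\mathcal{H}_{ANOVA}^*$, and the uniqueness of the Hoeffding--Sobol decomposition. You merely spell out the details (centered sections, Fubini, pairwise orthogonality) that the paper compresses into ``the result follows.''
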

\begin{proof}
The kernels $k_0^i$ are associated with RKHS $\mathcal{H}_0^i$ of zero-mean functions, so we have $\mathds{1}^i \perp_{L^2} \mathcal{H}^i_0$. The underlying RKHS associated with $K_{ANOVA}^*$ is
\begin{equation}
\mathcal{H}_{ANOVA}^* = \prod_{i=1}^d  \left ( \mathds{1}^i \stackrel{\perp}{\oplus} \mathcal{H}^i_0 \right )
\label{eq:Hstar}
\end{equation}
where $\perp$ stands for the $L^2$ inner product. The result follows.
\end{proof}

Unlike usual ANOVA kernels, the class of $K_{ANOVA}^*$ ensures that the terms $m_I$ are mutually orthogonal in the $L^2$ sense.

%%%%%%%%%%%%%%%%%%%%%%%%%%%%%%%%%%%%%%%%%%%%%%%%%%%%%%%%%%
\paragraph{Example 2: ANOVA representation}
In order to illustrate the use of the $K_{ANOVA}^*$ kernels, we consider an analytical test function called the $g$-function of Sobol \cite{Saltelli2000}. This function is defined over $[0,1]^d$ by
\begin{equation}
g(x_1,\dots,x_d)= \prod_{k=1}^d \frac{|4x_k-2|+a_k}{1+a_k} \text{\qquad with } a_k > 0.
\label{eq:Gsob}
\end{equation}
We consider here the case $d=2$ and we set $a_1=1,\ a_2=2$. Starting from a one-dimensional Mat\'ern 3/2 kernel 
\begin{equation}
k_m(x,y) = (1+2 \, |x-y|) \exp(-2 \, |x-y|),
\label{eq:MK}
\end{equation}
we derive the expression of $K_{ANOVA}^*$ using Eq.~\ref{eq:k1} and~\ref{eq:KANOVA}:
\begin{equation}
K_{ANOVA}^*(\mathbf{x},\mathbf{y}) =  \prod_{i=1}^2 \left ( 1 + k_m(x_i,y_i) - \frac{\displaystyle \int_0^1 k_m(x_i,s) \dx{s}  \int_0^1 k_m(y_i,s) \dx{s}}{\displaystyle \iint_0^1 k_m(s,t) \dx{s} \dx{t}} \right ).
\label{eq:KASM}
\end{equation}

We then build the optimal interpolator $m \in \mathcal{H}_{ANOVA}^*$ based on the observation of $g$ at 20 points of $[0,1]^2$ (which result from a LHS-maximin procedure~\cite{Santner2003}). According to Proposition 2, the function $m$ can be split as a sum of 4 submodels $m_0$, $m_1$, $m_2$ and $m_{12}$ which are represented in Figure~\ref{fig:2}. We observe numerically that the mean value of $m_1$, $m_2$ and $m_{12}$ is lower than $10^{-14}$ (in absolute value), corroborating that these functions are zero-mean. Similarly, the value of the numerical computation of the scalar products between any two different functions of the set $\{m_0,m_1,m_2,m_{12} \}$ has the same order of magnitude.

\medskip
As a comparison, we consider on this test function the usual ANOVA kernel $K_{ANOVA}(\mathbf{x},\mathbf{y}) =  (1 + k_m(x_1,y_1)) \times (1 + k_m(x_2,y_2))$. For the corresponding model, the mean value of $m_1$, $m_2$ and $m_{12}$ is typically in the range $[-0.1,0.3]$ and a scalar product between $m_I$ and $m_J$ in $[-0.01,0.03]$ for $I \neq J$. 

\begin{figure}[!ht]
\centering
\subfigure[$g$]{
\includegraphics[width=3.5cm]{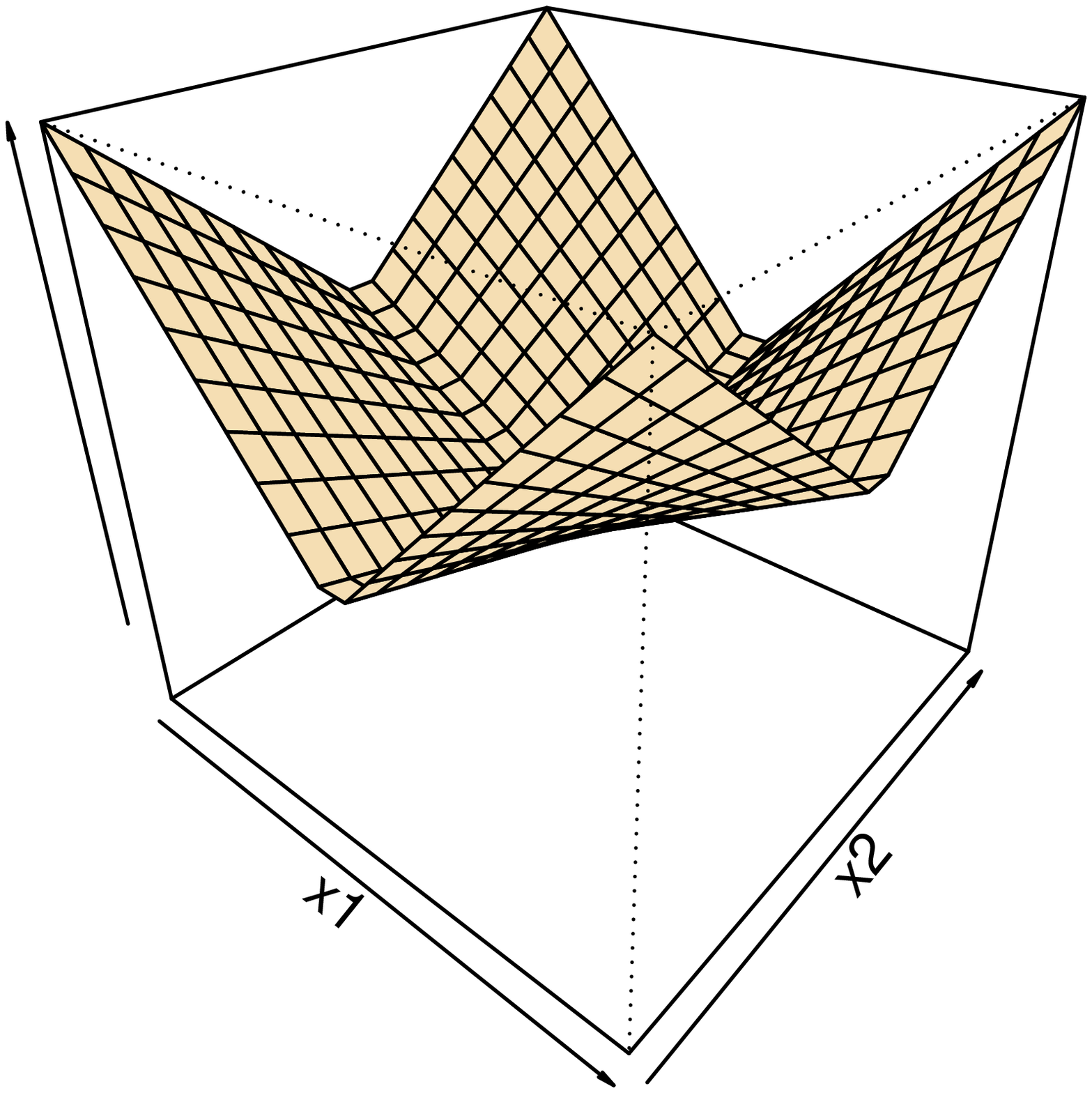}
\label{fig:subfig6}
}
\subfigure[$m$]{
\includegraphics[width=3.5cm]{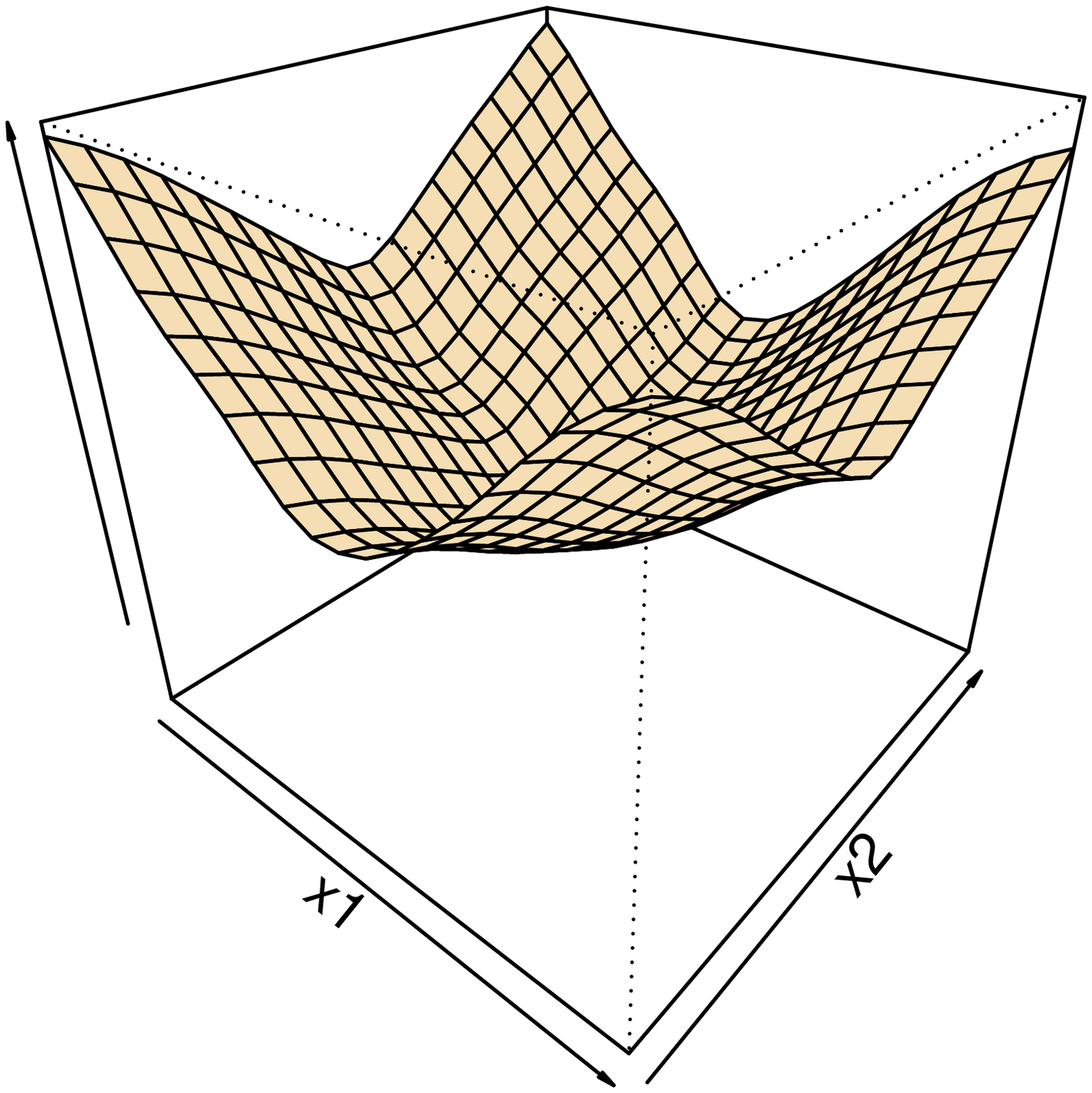}
\label{fig:subfig5}
}
\subfigure[$m_0$]{
\includegraphics[width=3.5cm]{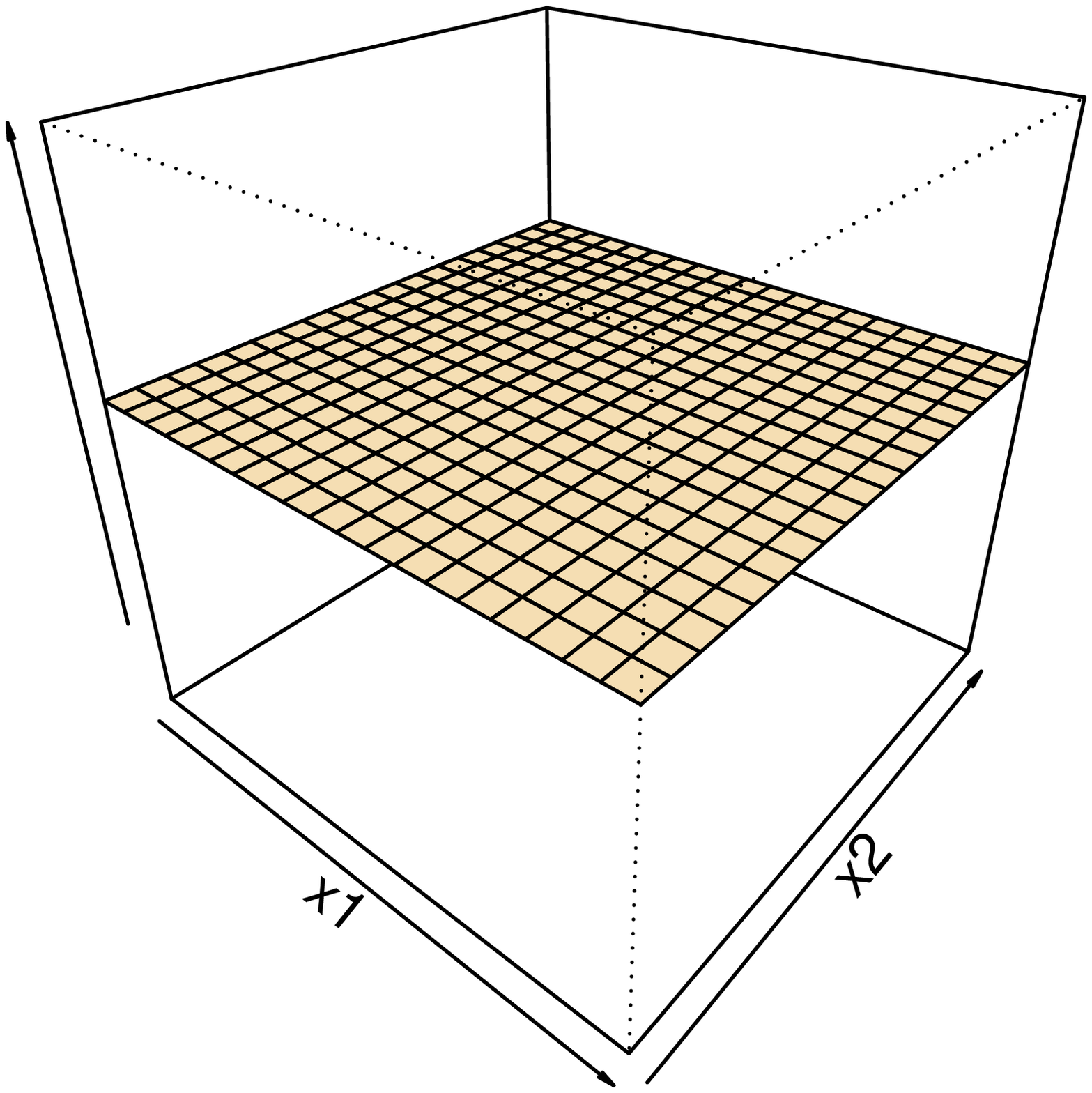}
\label{fig:subfig1}
}
\subfigure[$m_1$]{
\includegraphics[width=3.5cm]{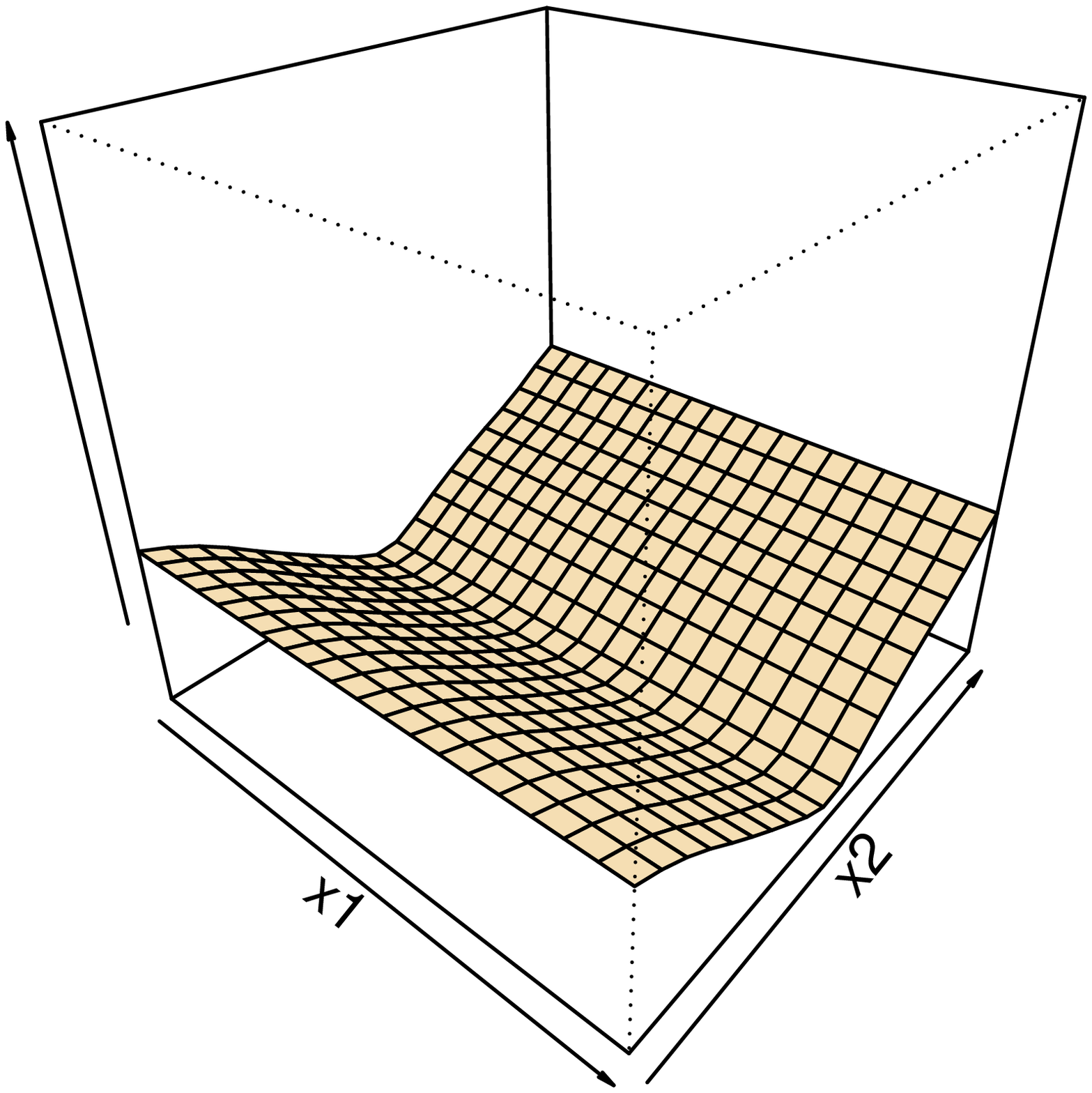}
\label{fig:subfig2}
}
\subfigure[$m_2$]{
\includegraphics[width=3.5cm]{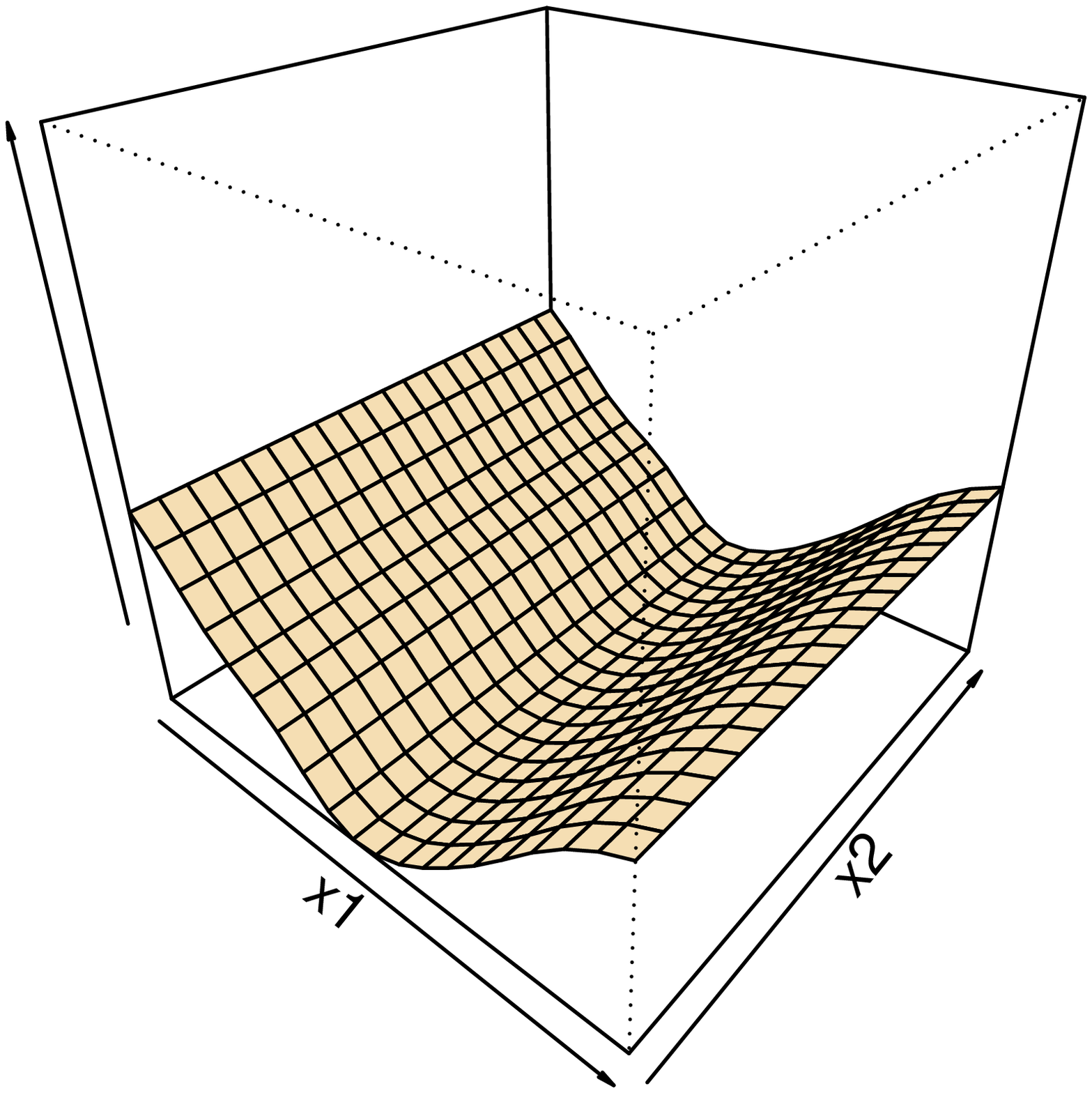}
\label{fig:subfig3}
}
\subfigure[$m_{12}$]{
\includegraphics[width=3.5cm]{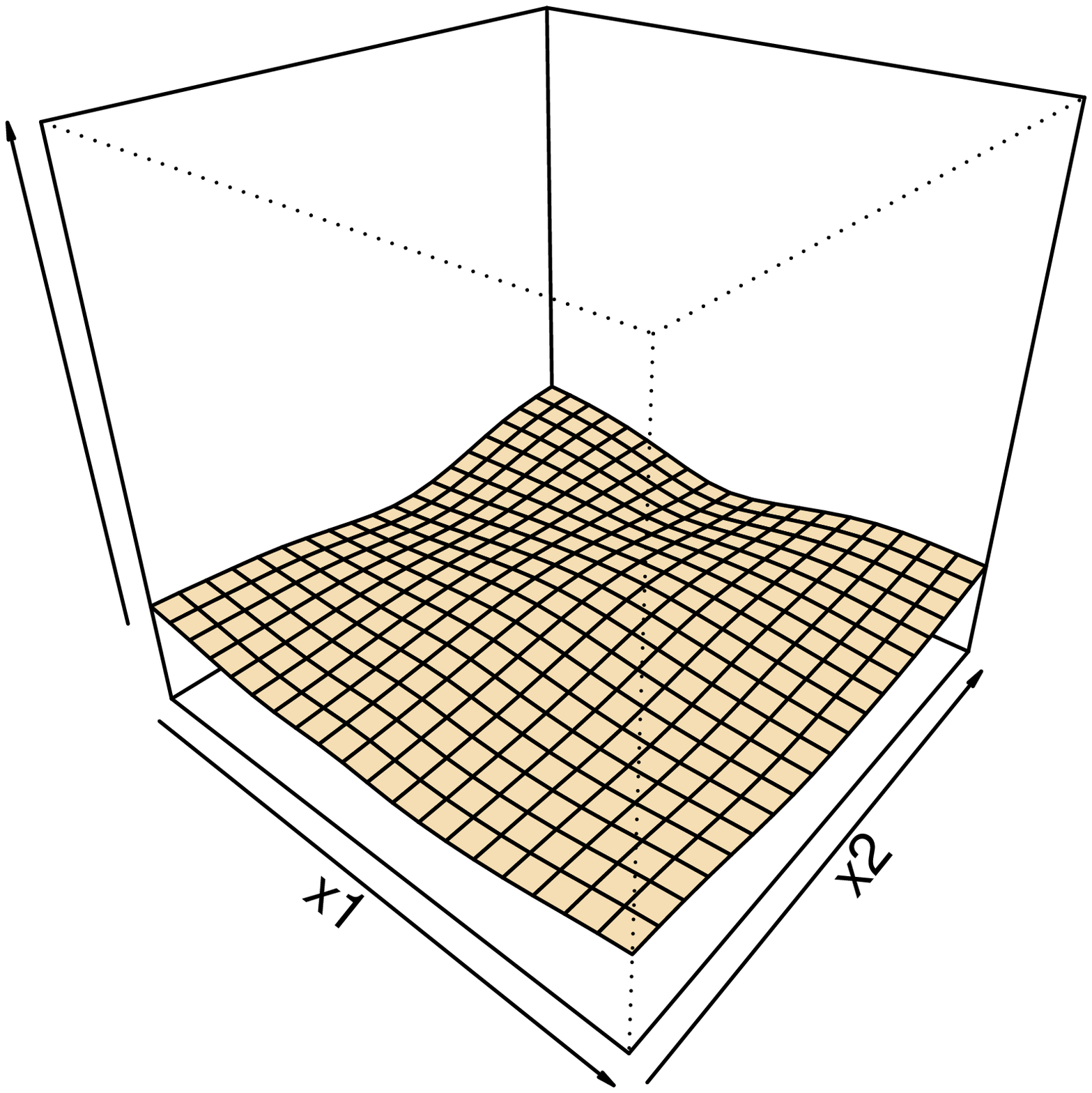}
\label{fig:subfig4}
}
\caption[No Optional caption]{Representation of the $g$-function, the model $m$ and all the submodels on $[0,1]^2$. The $z$ scale is the same on all graphs.}
\label{fig:2}
\end{figure}

%%%%%%%%%%%%%%%%%%%%%%%%%%%%%%%%%%%%%%%%%%%%%%%%%%%%%%%%%%%
\subsection{Computation of sensitivity indices}
We have seen that the terms $m_I$ of the ANOVA representation of $m$ can be obtained naturally when using kernel $K_{ANOVA}^*$. As the sensitivity indices are based on those terms, they also benefit from the structure of $K_{ANOVA}^*$ and they can be computed analytically and efficiently:

\begin{propo}
The sensitivity indices $S_I$ of $m$ are given by:
\begin{equation}
S_I= \frac{\V(m_I(\mathbf{X}_I))}{\V(m(\mathbf{X}))} = \frac{ \mathbf{F}^t \mathrm{K}^{-1} \left (\bigodot_{i \in I} \Gamma_i \right ) \mathrm{K}^{-1} \mathbf{F}}{ \mathbf{F}^t \mathrm{K}^{-1} \left ( \bigodot_{i=1}^d (1_{n \times n } + \Gamma_i) - 1_{n \times n } \right ) \mathrm{K}^{-1} \mathbf{F}}
\label{eq:Sindfinal}
\end{equation}
where $\Gamma_i$ is the $n \times n$ matrix $ \Gamma_i = \int_{D_i} \mathbf{k}^i_0(x_i) \mathbf{k}^i_0(x_i)^t \dix{x_i}{i}$ and $1_{k \times l }$ is the $k \times l$ matrix of ones.
\label{prop3}
\end{propo}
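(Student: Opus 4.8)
The plan is to compute $\V(m_I(\mathbf{X}_I))$ and $\V(m(\mathbf{X}))$ separately by exploiting the explicit form of the submodels from Proposition~\ref{prop2}, and then take the ratio. First I would recall from Eq.~\ref{eq:propo2} that $m_I(\mathbf{x}) = \left( \bigodot_{i \in I} \mathbf{k}^i_0(x_i) \right)^t \mathrm{K}^{-1} \mathbf{F}$. Since $\mathrm{K}^{-1}\mathbf{F}$ is a fixed vector, say $\boldsymbol{\alpha}$, we have $m_I(\mathbf{X}_I) = \boldsymbol{\alpha}^t \bigodot_{i\in I}\mathbf{k}^i_0(X_i)$, so that
\begin{equation}
\V(m_I(\mathbf{X}_I)) = \boldsymbol{\alpha}^t \,\E\!\left[ \left(\bigodot_{i\in I}\mathbf{k}^i_0(X_i)\right)\left(\bigodot_{i\in I}\mathbf{k}^i_0(X_i)\right)^t \right]\boldsymbol{\alpha} - \left(\E\,m_I(\mathbf{X}_I)\right)^2.
\nonumber
\end{equation}
By Proposition~\ref{prop2} the functions $m_I$ for $I\neq\emptyset$ are zero-mean, so the second term vanishes; this uses the $L^2_0$ property established in the proof of Proposition~\ref{prop1} and carried through the tensor construction. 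It remains to identify the expectation matrix: because $\mu = \bigotimes_i \mu_i$ and the $X_i$ are independent, the $(j,\ell)$ entry of that matrix factorizes as $\prod_{i\in I}\int_{D_i} k^i_0(x_i,\mathcal{X}_j^{(i)})k^i_0(x_i,\mathcal{X}_\ell^{(i)})\,\mathrm{d}\mu_i(x_i)$, which is exactly $\left(\bigodot_{i\in I}\Gamma_i\right)_{j,\ell}$ with $\Gamma_i = \int_{D_i}\mathbf{k}^i_0(x_i)\mathbf{k}^i_0(x_i)^t\,\mathrm{d}\mu_i(x_i)$. Hence $\V(m_I(\mathbf{X}_I)) = \mathbf{F}^t\mathrm{K}^{-1}\left(\bigodot_{i\in I}\Gamma_i\right)\mathrm{K}^{-1}\mathbf{F}$, which is the numerator.

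For the denominator I would proceed the same way, but now starting from the full ANOVA expansion $m = m_0 + \sum_{I\neq\emptyset} m_I$. Since the $m_I$ are mutually $L^2$-orthogonal and zero-mean for $I\neq\emptyset$ (Proposition~\ref{prop2}), the variance decomposes as $\V(m(\mathbf{X})) = \sum_{I\neq\emptyset}\V(m_I(\mathbf{X}_I)) = \sum_{I\neq\emptyset}\mathbf{F}^t\mathrm{K}^{-1}\left(\bigodot_{i\in I}\Gamma_i\right)\mathrm{K}^{-1}\mathbf{F}$. The key algebraic step is to recognize the sum over nonempty subsets as a product expansion: $\sum_{I\subset\{1,\dots,d\}}\bigodot_{i\in I}\Gamma_i = \bigodot_{i=1}^d(1_{n\times n}+\Gamma_i)$, where the empty-set term contributes $1_{n\times n}$ (the element-wise identity for $\odot$). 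Subtracting that empty-set term gives $\bigodot_{i=1}^d(1_{n\times n}+\Gamma_i) - 1_{n\times n}$, yielding the claimed denominator. Alternatively — and perhaps more directly — one could compute $\V(m(\mathbf{X}))$ straight from Eq.~\ref{eq:KANOVA}: writing $m(\mathbf{x}) = \boldsymbol{\alpha}^t\mathbf{k}(\mathbf{x})$ with $\mathbf{k}(\mathbf{x})$ having entries $K_{ANOVA}^*(\mathbf{x},\mathcal{X}_j) = \prod_i(1+k^i_0(x_i,\mathcal{X}_j^{(i)}))$, the matrix $\E[\mathbf{k}(\mathbf{X})\mathbf{k}(\mathbf{X})^t]$ has entries $\prod_i(1+\Gamma_i)_{j,\ell}$ by independence, while $(\E\,m(\mathbf{X}))^2 = \boldsymbol{\alpha}^t 1_{n\times n}\boldsymbol{\alpha}$ since $\E[k^i_0(X_i,\cdot)] = 0$ makes the mean of each product collapse to $1$; the difference again produces $\bigodot_i(1_{n\times n}+\Gamma_i) - 1_{n\times n}$.

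The only genuinely delicate points are bookkeeping rather than conceptual. One is making the element-wise-product algebra rigorous: $\odot$ is commutative and associative on fixed-size matrices with neutral element $1_{n\times n}$, so the binomial-style identity $\bigodot_i(1_{n\times n}+\Gamma_i) = \sum_I\bigodot_{i\in I}\Gamma_i$ holds entrywise, but I would state this carefully since $\odot$ is not matrix multiplication. The other is justifying the interchange of expectation and the finite sums/products and the finiteness of all $\Gamma_i$ — this follows because each $k^i_0$ satisfies H\ref{hyp1D1} (being the kernel of a sub-RKHS of $\mathcal{H}^i$, it inherits $\int\sqrt{k^i_0(s,s)}\,\mathrm{d}\mu_i(s)<\infty$, so $\int k^i_0(x_i,\mathcal{X}_j^{(i)})^2\,\mathrm{d}\mu_i$ is finite by Cauchy--Schwarz in $\mathcal{H}^i_0$), and $\mu_i$ is a probability measure so all products are integrable. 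I expect the element-wise product identity for the denominator to be the main thing a reader needs to check, and I would spell it out explicitly rather than leave it implicit.
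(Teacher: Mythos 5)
Your proposal is correct and matches the paper's argument: the numerator is obtained exactly as in the paper (zero means plus independence make the covariance of the element-wise product factor into $\bigodot_{i \in I} \Gamma_i$), and your ``alternative'' computation of the denominator via $\E[\mathbf{k}(\mathbf{X})\mathbf{k}(\mathbf{X})^t] - (\E\, m(\mathbf{X}))^2$ is precisely what the paper does. Your primary route for the denominator --- summing $\V(m_I(\mathbf{X}_I))$ over nonempty $I$ by $L^2$-orthogonality and then collapsing $\sum_{I} \bigodot_{i \in I} \Gamma_i$ into $\bigodot_{i=1}^d (1_{n \times n} + \Gamma_i)$ --- is just an equivalent rearrangement of the same computation.
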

\begin{proof}
The numerator is obtained by direct calculation. Using that $\mathbf{k}^i_0$ is a vector of zero mean functions and the independence of the $X_i$ we have:
\begin{equation}
\begin{split}
\V(m_I(\mathbf{X}_I)) & = \V \left( \left ( \bigodot_{i \in I} \mathbf{k}^i_0(X_i)^t \right ) \mathrm{K}^{-1} \mathbf{F} \right) \\
& =  \mathbf{F}^t  \mathrm{K}^{-1} \C \left ( \bigodot_{i \in I} \mathbf{k}^i_0(X_i) \right ) \mathrm{K}^{-1} \mathbf{F} \\
& =  \mathbf{F}^t  \mathrm{K}^{-1}  \bigodot_{i \in I} \C \left ( \mathbf{k}^i_0(X_i) \right ) \mathrm{K}^{-1} \mathbf{F}\\
& = \mathbf{F}^t  \mathrm{K}^{-1} \bigodot_{i \in I} \left( \int_{D_i} \mathbf{k}^i_0(x_i) \mathbf{k}^i_0(x_i)^t \dix{x_i}{i} \right) \mathrm{K}^{-1} \mathbf{F} .
\end{split}
\label{eq:calcnum}
\end{equation}
For the denominator, we obtain similarly
\begin{equation}
\begin{split}
\V(m(\mathbf{X})) & =  \mathbf{F}^t  \mathrm{K}^{-1}  \bigodot_{i \in I} \left ( \int_{D_i} \left( 1_{n \times 1} + \mathbf{k}^i_0(x_i) \right) \left(1_{n \times 1} + \mathbf{k}^i_0(x_i) \right)^t \dix{x_i}{i} \right) \mathrm{K}^{-1} \mathbf{F} \\
& \qquad - \mathbf{F}^t  \mathrm{K}^{-1}  1_{n \times n} \mathrm{K}^{-1} \mathbf{F}.
\end{split}
\label{eq:calcden}
\end{equation}
We then use the property that the $k_0^i(x,.)$ are zero mean functions again to obtain $\int_{D_i} \left( 1_{n \times 1} + \mathbf{k}^i_0(x_i) \right) \left(1_{n \times 1} + \mathbf{k}^i_0(x_i) \right)^t \dix{x_i}{i} = 1_{n \times n} + \Gamma_i$.
\end{proof}

A similar property is given in~\cite{Chen2005} for usual separable kernels but the computation of $S_I$ require computation of all $S_J$ for $J \subset I$. This recursivity induces a propagation of the numerical error when the integrals in $\Gamma_i$ are approximated numerically. Here, the particular structure of $K_{ANOVA}^*$ avoids any recursivity and each index is given directly by Proposition~\ref{prop3}.

\paragraph{Example 3: Computation of sensitivity indices}
As previously, we consider the $g$-function as a test function to illustrate Proposition~\ref{prop3}. One particular asset of $g$ is that the sensitivity indices $S_I$ can be obtained analytically:
\begin{equation}
S_I = \frac{\prod_{i \in I} \frac{1}{3(1+a_i)^2}}{\prod_{k=1}^{d} \left( 1+\frac{1}{3(1+a_k)^2} \right)-1}.
\label{eq:SI}
\end{equation}
The value of the dimension is set to $d=5$ and the coefficients are arbitrarily chosen to be $(a_1,a_2,a_3,a_4,a_5) = (0.2,0.6,0.8,100,100)$. With these values, the variables $X_1$, $X_2$ and $X_3$ explain $99.99\%$ of the variance of $g(\mathbf{X})$.

\medskip

Now, we compare the indices given by Proposition~\ref{prop3} for three different kernels versus the analytical value given by Eq.~\ref{eq:SI}. The kernels we use are the Mat\'ern 3/2 kernel $k_m$ introduced in Eq.~\ref{eq:MK}, the Gaussian kernel $k_g$ of Eq.~\ref{eq:noytest} and the kernel $\tilde{k}_b(x,y)=1 + \min(x,y)$. In the latter, the value 1 is added to the kernel of the Brownian motion to relax the constraint of nullity at the origin.

\medskip

The Design of Experiment (DoE) is a 50 point maximin LHS over $[0,1]^5$. As each DoE leads to a particular model and a different value of the sensitivity indices, computation of the indices was repeated for 50 different DoEs. The obtained mean values and the standard deviations are gathered in Table~\ref{tab:1}. As for example 1, we studied for the model based on $\tilde{k}_b$ the error due to the approximation of the integrals in Eq.~\ref{eq:Sindfinal} by Riemann sums based on 100 points per dimension. The error on the final values of the indices does not appear to be significant since it is typically around $10^{-5}$.

\begin{table}[ht]
	\centering
		\begin{tabular}{|c|ccccccc|}
		\hline
		I & 1 & 2 & 3 & \{1,2\} & \{1,3\} & \{2,3\} & \{1,2,3\} \\ \hline
		Analytical   & 0.43 & 0.24 & 0.19 & 0.06 & 0.04 & 0.03 & 0.01 \\ \hline
		\multirow{2}{*}{$\tilde{k}_b$}  &  0.44  &  0.27  &  0.20  &  0.01  &  0.01  &  0.01  &  0.00 \\ 
		                              & (0.05) & (0.05) & (0.04) & (0.01) & (0.00) & (0.01) & (0.00) \\ \hline
		\multirow{2}{*}{$k_m$}        &  0.44  &  0.24  &  0.19  &  0.01  &  0.01  &  0.01  &  0.00  \\ 
		                              & (0.06) & (0.05) & (0.04) & (0.01) & (0.01) & (0.01) & (0.00) \\ \hline
		\multirow{2}{*}{$k_g$}        &  0.33  &  0.19  &  0.14  &  0.01  &  0.02  &  0.03  &  0.03  \\ 
		                              & (0.08) & (0.06) & (0.05) & (0.02) & (0.02) & (0.02) & (0.02) \\ \hline
		\end{tabular}
\caption{Mean value and standard deviation of the sensitivity indices for the $g$-function. }
\label{tab:1}		
\end{table}
The computational accuracy of the global sensitivity indices is judged to be satisfactory for $\tilde{k}_b$ and $k_m$. On the other hand, the model based on $k_g$ performs significantly worse than the others. This can be explained by the unsuitability of the Gaussian kernel for the approximation of a function that is not differentiable everywhere. This example highlights the importance of choosing a kernel suited to the problem at hand.
\medskip

Although the indices are correctly estimated with $\tilde{k}_b$ and $k_m$, the sum of the indices presented in Table~\ref{tab:1} is respectively equal to $0.94$ and $0.90$ for these kernels (whereas we obtain $1$ for the first line). It implies that the associated models present variations for some sets of indices that are not represented in this table and eventually the sparsity in the variance structure of the $g$-function is not perfectly captured by the model. As the kernels $K_{ANOVA}^*$ are ANOVA kernels, methods such as SUPANOVA~\cite{Gunn2002} or hierarchical kernel learning~\cite{Bach2009} can be applied for detecting this sparsity. The combination of $K_{ANOVA}^*$ and SUPANOVA present similarities with the COSSO~\cite{Lin2006} for splines but such developments are out of the scope of this article.

%%%%%%%%%%%%%%%%%%%%%%%%%%%%%%%%%%%%%%%%%%%%%%%%%%%%%%%%%%%%%%%%%%%%%%%%%%%%%%
%%%%%%%%%%%%%%%%%%%%%%%%%%%%%%%%%%%%%%%%%%%%%%%%%%%%%%%%%%%%%%%%%%%%%%%%%%%%%%
\section{Generalization to regularization problems}
\label{sec:reg}
We have considered for now the interpolation point of view and the expression of the best predictor given in Eq.~\ref{eq:krikrisol}. However, all the results can immediately be generalized to regularization problems. Indeed, the proofs of Propositions 2 and 3 are still valid if we replace the Gram matrix $\mathrm{K}$ by $\mathrm{K}+ \lambda \mathrm{I}$ so we have the following results.
\begin{coro}[Corollary of Proposition 2:]
\textit{The terms of the ANOVA representation of the regularization best predictor $\tilde{m}(\mathbf{x}) = \mathbf{k}(\mathbf{x})^t (\mathrm{K}+ \lambda \mathrm{I})^{-1} \mathbf{F}$ are}
\begin{equation}
\tilde{m}_I = \left( \bigodot_{i \in I} \mathbf{k}^i_0(x_i)^t \right) (\mathrm{K}+ \lambda \mathrm{I})^{-1} \mathbf{F}.
\label{eq:coro2}
\end{equation}
\end{coro}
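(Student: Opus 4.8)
The plan is to follow the proof of Proposition~\ref{prop2} with $\mathrm{K}$ replaced by $\mathrm{K}+\lambda\mathrm{I}$ everywhere, since the Gram matrix enters only as a fixed linear map applied to the data vector $\mathbf{F}$, a role that $(\mathrm{K}+\lambda\mathrm{I})^{-1}$ plays equally well. First I would record that $\tilde m$ genuinely lies in $\mathcal{H}^*_{ANOVA}$: by Eq.~\ref{eq:krikriregsol} it equals $\sum_{j=1}^n c_j\, K^*_{ANOVA}(\mathcal{X}_j,\cdot)$ with $\mathbf{c}=(\mathrm{K}+\lambda\mathrm{I})^{-1}\mathbf{F}$, and each $K^*_{ANOVA}(\mathcal{X}_j,\cdot)$ is the representer in $\mathcal{H}^*_{ANOVA}$ of the evaluation functional at $\mathcal{X}_j$; hence the functional ANOVA (Hoeffding--Sobol) decomposition of $\tilde m$ is well defined, and by its uniqueness it is enough to exhibit \emph{some} decomposition of $\tilde m$ whose terms have the centering and orthogonality properties of Eq.~\ref{eq:decf}.

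Next I would expand the kernel vector. Because $K^*_{ANOVA}$ (Eq.~\ref{eq:KANOVA}) has exactly the ANOVA-kernel product structure of Eq.~\ref{eq:KKANOVA} with each $k^i$ replaced by $k^i_0$, the computation giving Eq.~\ref{eq:deck} applies verbatim and yields $\mathbf{k}(\mathbf{x}) = \mathbf{1} + \sum_{I \subset \{1,\dots,d\}} \bigodot_{i \in I} \mathbf{k}^i_0(x_i)$. Substituting this into $\tilde m(\mathbf{x}) = \mathbf{k}(\mathbf{x})^t(\mathrm{K}+\lambda\mathrm{I})^{-1}\mathbf{F}$ splits $\tilde m$ as $\tilde m_0 + \sum_I \tilde m_I$, where $\tilde m_0 = \mathbf{1}^t(\mathrm{K}+\lambda\mathrm{I})^{-1}\mathbf{F}$ is constant and $\tilde m_I$ is the expression of Eq.~\ref{eq:coro2}; this is the decomposition of Eq.~\ref{eq:decm} written for the regularization predictor.

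It then remains to identify each $\tilde m_I$ with the right subspace. Its $j$-th coordinate contributes $\prod_{i \in I} k^i_0(x_i,(\mathcal{X}_j)_i)$, i.e.\ a product over $i \in I$ of one-dimensional functions lying in $\mathcal{H}^i_0$, so $\tilde m_I$ belongs to $\mathcal{H}^*_I := \bigotimes_{i \in I} \mathcal{H}^i_0 \otimes \bigotimes_{i \notin I} \mathds{1}^i$. By Proposition~\ref{prop1} each $\mathcal{H}^i_0$ consists of $\mu_i$-centered functions, so $\tilde m_I$ integrates to zero in every variable $x_i$ with $i \in I$, and for $I \neq J$ the spaces $\mathcal{H}^*_I$ and $\mathcal{H}^*_J$ are $L^2(D,\mu)$-orthogonal (a factor $\mathds{1}^i \perp_{L^2} \mathcal{H}^i_0$ appears in the symmetric difference of $I$ and $J$, exactly as in Eq.~\ref{eq:L2dectot}, forcing the inner product to vanish). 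These are precisely the defining properties of the ANOVA terms, so by uniqueness of the Hoeffding--Sobol decomposition $\tilde m_I$ is the term indexed by $I$.

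The only step demanding any care — bookkeeping rather than a genuine obstacle — is the centering/orthogonality check of the last paragraph, which is exactly the content invoked in the single line closing the proof of Proposition~\ref{prop2}; everything else is the trivial observation that $\lambda\mathrm{I}$ never touches the geometry of $\mathcal{H}^*_{ANOVA}$, only the coefficient vector $(\mathrm{K}+\lambda\mathrm{I})^{-1}\mathbf{F}$. Accordingly I expect the write-up to be essentially ``replace $\mathrm{K}$ by $\mathrm{K}+\lambda\mathrm{I}$ in the proof of Proposition~\ref{prop2},'' supplemented by the remark that $\tilde m \in \mathcal{H}^*_{ANOVA}$ so that referring to \emph{its} functional ANOVA representation is legitimate.
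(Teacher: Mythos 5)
Your proposal is correct and takes essentially the same approach as the paper, which disposes of this corollary with the single remark that the proof of Proposition~2 remains valid when $\mathrm{K}$ is replaced by $\mathrm{K}+\lambda\mathrm{I}$. You have simply spelled out in full what that remark relies on --- that the regularization only changes the coefficient vector $(\mathrm{K}+\lambda\mathrm{I})^{-1}\mathbf{F}$ while the kernel expansion of $\mathbf{k}(\mathbf{x})$ and the $L^2$-orthogonality structure of $\mathcal{H}^*_{ANOVA}$ are untouched --- which matches the paper's intent exactly.
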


\begin{coro}[Corollary of Proposition 3:]
\textit{The expression of the sensitivity indices of $\tilde{m}$ is}
\begin{equation}
S_I= \frac{ \mathbf{F}^t (\mathrm{K}+ \lambda \mathrm{I})^{-1} \left (\bigodot_{i \in I} \Gamma_i \right ) (\mathrm{K}+ \lambda \mathrm{I})^{-1} \mathbf{F}}{ \mathbf{F}^t (\mathrm{K}+ \lambda \mathrm{I})^{-1} \left ( \bigodot_{i=1}^d (1_{n \times n } + \Gamma_i) - 1_{n \times n } \right ) (\mathrm{K}+ \lambda \mathrm{I})^{-1} \mathbf{F}}.
\label{eq:coro3}
\end{equation}
\end{coro}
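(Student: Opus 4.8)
The plan is to observe that $\tilde m$ differs from the interpolator $m$ only through the replacement of $\mathrm{K}^{-1}$ by $(\mathrm{K}+\lambda\mathrm{I})^{-1}$, a fixed deterministic matrix, so that the entire argument behind Proposition~\ref{prop3} transfers verbatim. First I would record that, $\mathrm{K}$ being a Gram matrix, it is positive semidefinite, hence $\mathrm{K}+\lambda\mathrm{I}$ is positive definite and invertible for every $\lambda>0$, and symmetric, so that $(\mathrm{K}+\lambda\mathrm{I})^{-t}=(\mathrm{K}+\lambda\mathrm{I})^{-1}$ and all the quadratic forms below are well defined. Then I would invoke Eq.~\ref{eq:coro2}: the ANOVA term of $\tilde m$ indexed by $I$ is $\tilde m_I(\mathbf{x})=\bigl(\bigodot_{i\in I}\mathbf{k}^i_0(x_i)^t\bigr)(\mathrm{K}+\lambda\mathrm{I})^{-1}\mathbf{F}$, which has exactly the form used at the start of the proof of Proposition~\ref{prop3}, only with the new coefficient vector $(\mathrm{K}+\lambda\mathrm{I})^{-1}\mathbf{F}$ in place of $\mathrm{K}^{-1}\mathbf{F}$.

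The key point is that the three ingredients of the proof of Proposition~\ref{prop3} concern the basis functions and the measure $\mu$, not the regression coefficients: the $k_0^i$ still span the RKHS $\mathcal{H}_0^i$ of $\mu_i$-centered functions, so $\mathbf{k}_0^i$ remains a vector of zero-mean functions; the $X_i$ remain independent; and the covariance of $\bigodot_{i\in I}\mathbf{k}^i_0(X_i)$ therefore still factorizes as the elementwise product $\bigodot_{i\in I}\C(\mathbf{k}^i_0(X_i))$, each factor being the unchanged matrix $\Gamma_i=\int_{D_i}\mathbf{k}^i_0(x_i)\mathbf{k}^i_0(x_i)^t\,\dix{x_i}{i}$. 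Running the computation of Eq.~\ref{eq:calcnum} with $(\mathrm{K}+\lambda\mathrm{I})^{-1}$ substituted throughout yields
\[
\V(\tilde m_I(\mathbf{X}_I)) = \mathbf{F}^t(\mathrm{K}+\lambda\mathrm{I})^{-1}\Bigl(\bigodot_{i\in I}\Gamma_i\Bigr)(\mathrm{K}+\lambda\mathrm{I})^{-1}\mathbf{F},
\]
the numerator of Eq.~\ref{eq:coro3}. For the denominator I would repeat the computation of Eq.~\ref{eq:calcden} in the same way, using once more that the cross terms vanish by the zero-mean property so that $\int_{D_i}(1_{n\times1}+\mathbf{k}^i_0(x_i))(1_{n\times1}+\mathbf{k}^i_0(x_i))^t\,\dix{x_i}{i}=1_{n\times n}+\Gamma_i$, which gives $\V(\tilde m(\mathbf{X}))=\mathbf{F}^t(\mathrm{K}+\lambda\mathrm{I})^{-1}\bigl(\bigodot_{i=1}^d(1_{n\times n}+\Gamma_i)-1_{n\times n}\bigr)(\mathrm{K}+\lambda\mathrm{I})^{-1}\mathbf{F}$. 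Dividing the two expressions gives Eq.~\ref{eq:coro3}.

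There is no genuine obstacle here: the whole content is that $\lambda\mathrm{I}$ enters only as a deterministic additive term inside a matrix that is then inverted and sandwiched between random vectors whose joint distribution is unaffected, so none of the structural facts (zero mean, independence, the $\Gamma_i$ identities) is disturbed. The only point worth a line of care is that the ratio is well posed, i.e.\ that $\V(\tilde m(\mathbf{X}))>0$; this holds as soon as $\mathbf{F}$ is not annihilated by the relevant quadratic form, which is the analogue of the nondegeneracy already implicitly assumed in Proposition~\ref{prop3}. I would simply remark that the proofs of Propositions~\ref{prop2} and~\ref{prop3} are unchanged under $\mathrm{K}\mapsto\mathrm{K}+\lambda\mathrm{I}$ and state the result.
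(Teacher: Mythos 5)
Your proposal is correct and follows exactly the paper's route: the paper likewise justifies this corollary by noting that the proof of Proposition~3 goes through unchanged when $\mathrm{K}$ is replaced by $\mathrm{K}+\lambda\mathrm{I}$, since the zero-mean property of the $\mathbf{k}^i_0$, the independence of the $X_i$, and the matrices $\Gamma_i$ are all untouched by this substitution. Your write-up simply makes that observation more explicit.
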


\paragraph{Example 4}
The aim of this last example is to illustrate two points: the corollaries we have just stated and the use of a non-uniform measure. We consider here a simple test function introduced in~\cite{saltelli2008global}:
\begin{equation}
\begin{split}
f\ : \mathds{R}^2 & \rightarrow \mathds{R} \\
\mathbf{x} & \mapsto x_1 + x_2^2 + x_1x_2.
\end{split}
\label{eq:ftest}
\end{equation}
The input space $\mathds{R} \times \mathds{R}$ is endowed with a measure $\mu_1 \otimes \mu_2$ where $\mu_i$ is the standard normal distribution. With these settings, the ANOVA representation of $f$ is
\begin{equation}
f(\mathbf{x}) = \underbrace{1}_{f_0} + \underbrace{x_1}_{f_1} + \underbrace{x_2^2 - 1}_{f_2} + \underbrace{x_1 x_2}_{f_{1,2}}
\label{eq:fANOVA}
\end{equation} 
and the sensitivity indices $S_1$, $S_2$ and $S_{1,2}$ are respectively equal to $1/4$, $1/2$, $1/4$. As we want to illustrate here a regularization issue, we assume that the observations of $f$ are affected by a noise of observation $f_{obs}=f+\varepsilon$ where $ \varepsilon $ is a specific realization of a Gaussian white noise with variance $\lambda$. 

\medskip

The DoE is a 20 point LHS maximin over $[-5,5]^2$ and we consider the following kernel over $\mathds{R}^2 \times \mathds{R}^2$:
\begin{equation}
K_{ANOVA}^*(\mathbf{x},\mathbf{y}) = 200 \: (1 + g_0(x_1,y_1)) \: (1 + g_0(x_2,y_2))
\label{eq:Kanovafin}
\end{equation}
where $g_0$ is obtained by decomposing a Gaussian kernel $k_g(x,y)=exp \left(- \left( \frac{x-y}{10} \right)^2 \right)$. The value $200$ is a scale parameter roughly estimated from the variance of the observations at the DoE. 

\medskip

Figure~\ref{fig:fin} represents the model and main effects we obtained for $\lambda=1$. It appears that despite the small number of design points, the submodels $\tilde{m}_1$ and $\tilde{m}_2$ give a good approximation of the analytical main effects $f_1$ and $f_2$.

\begin{figure}[!ht]
\centering
\subfigure[$\tilde{m}(\mathbf{x})$]{
\includegraphics[width=3.5cm]{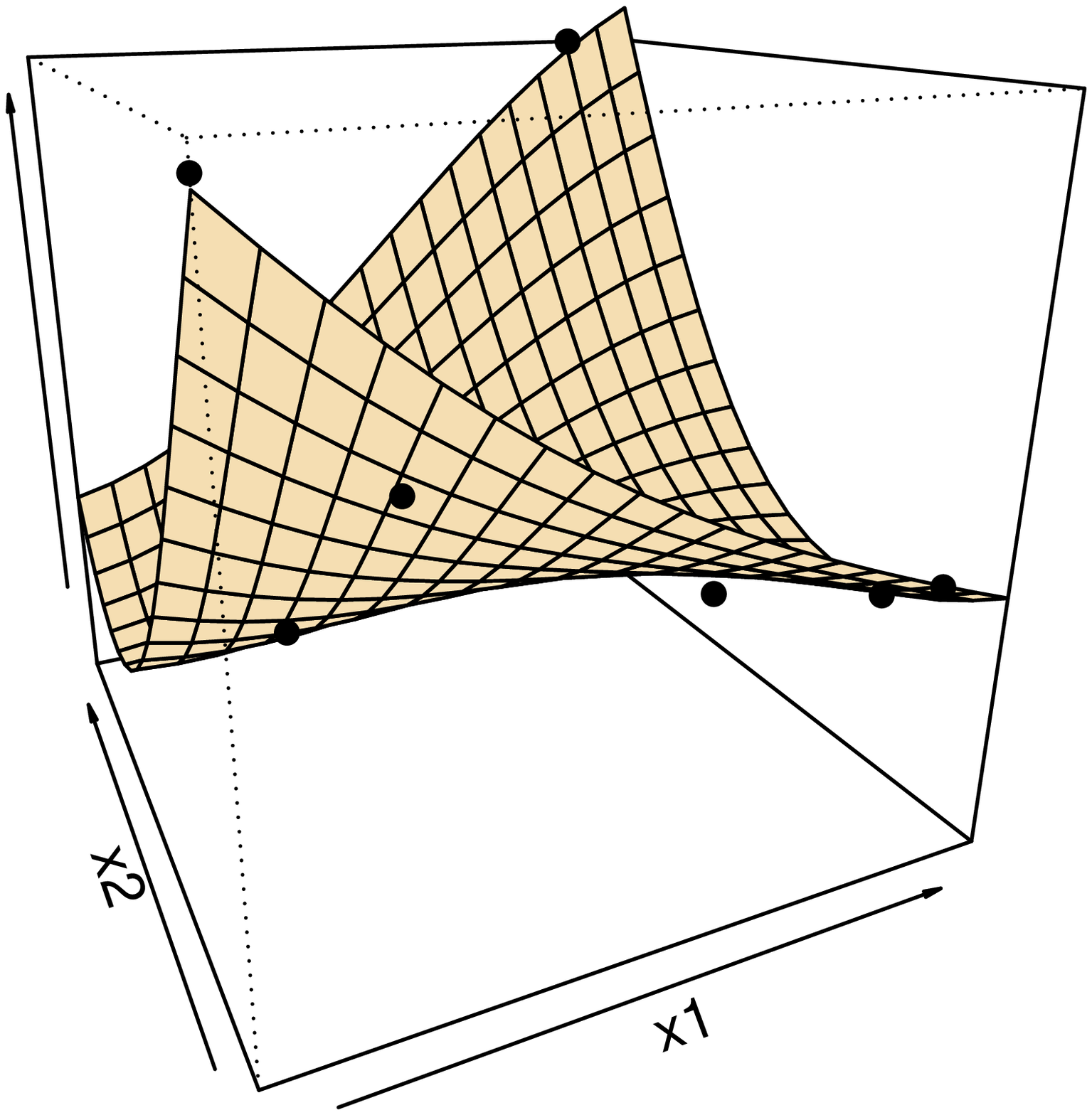}
\label{fig:subfig6a}
}
\subfigure[$\tilde{m}_1(x_1)$]{
\includegraphics[width=3.5cm]{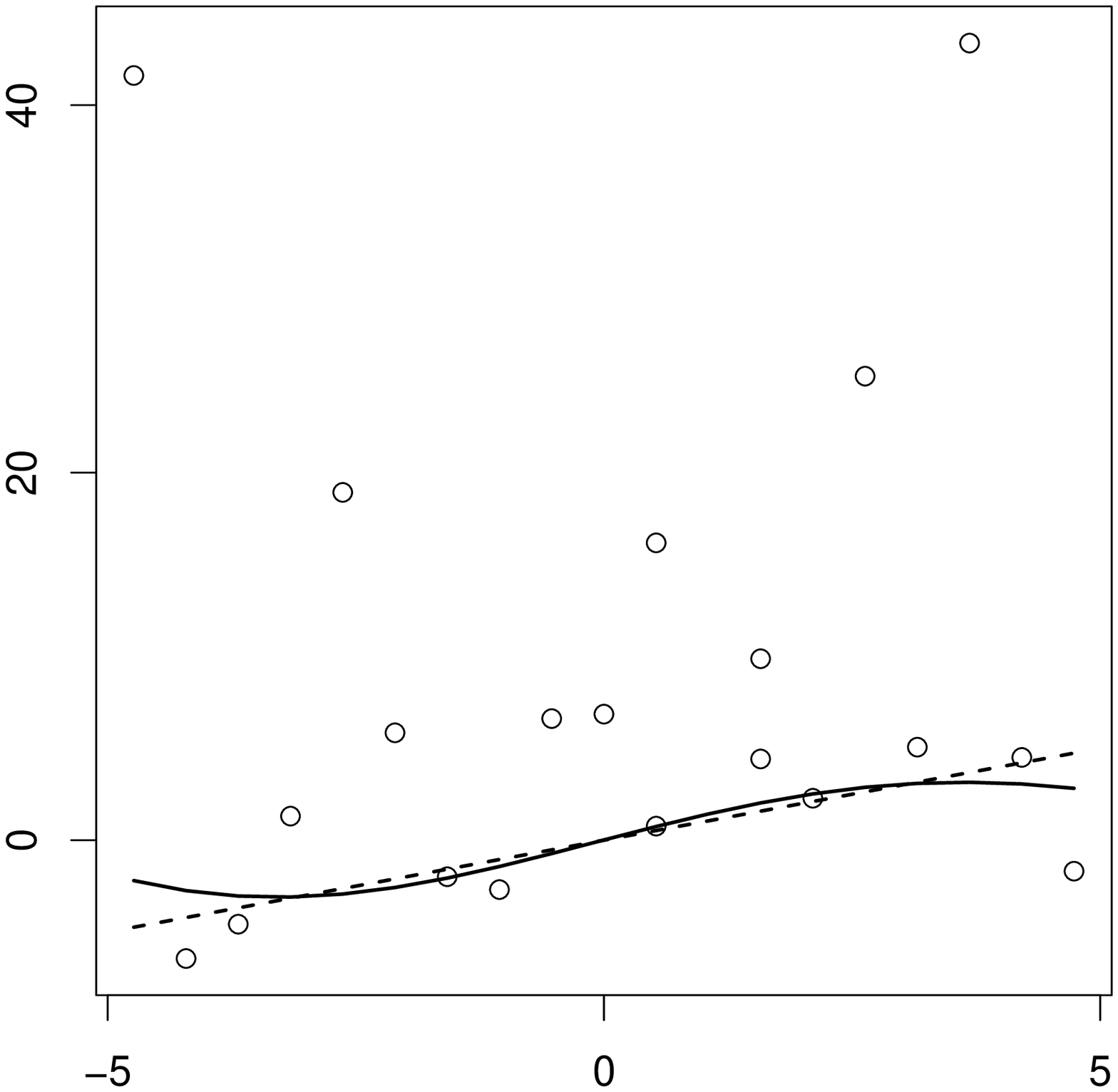}
\label{fig:subfig5b}
}
\subfigure[$\tilde{m}_2(x_2)$]{
\includegraphics[width=3.5cm]{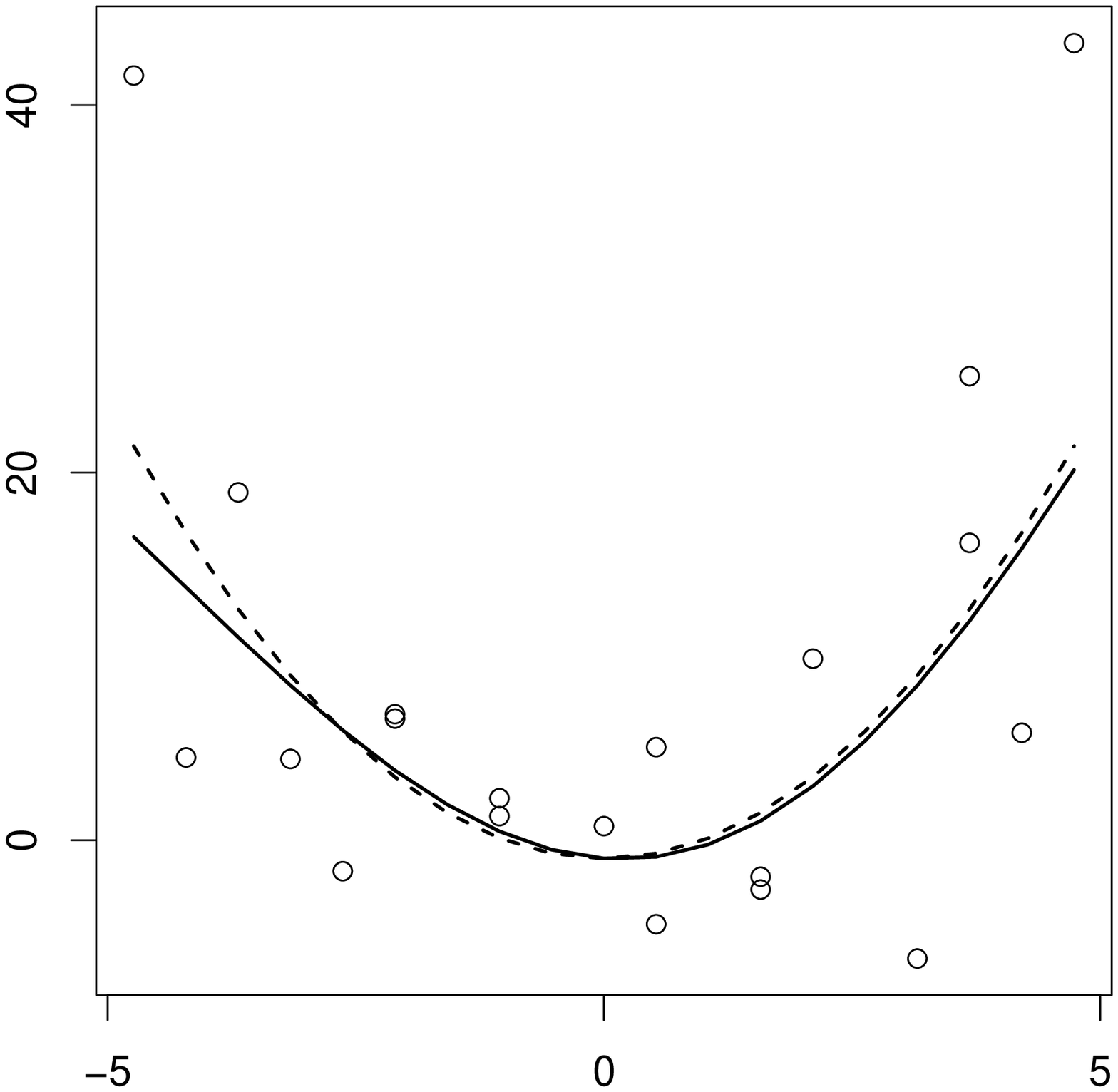}
\label{fig:subfig1c}
}
\caption[No Optional caption]{Representations of the model and two submodels for $\lambda=1$. The dots correspond to the observations and for panels (b) and (c), the dashed lines stand for the analytical main effects given in Eq.~\ref{eq:fANOVA}.}
\label{fig:fin}
\end{figure}

\medskip

Table~\ref{tab:2} gives the values of the sensitivity indices for different levels of noise. As $\V(f(\mathbf{X}))=4$ and $\V(\varepsilon(\mathbf{X}))= \lambda$, the variance of the noise we introduced is up to four times larger than the variance of the signal we want to extract. The computation of the indices appears to be very robust for this example, even for a low signal-to-noise ratio. 

\medskip

For $\lambda =0$, the model retrieves the analytical value of the parameters. When $\lambda$ increases, the standard deviation of the estimates increases and the index $S_2$ tends to be slightly underestimated. One can note that the analytical value is at the most one standard deviation away from the estimated mean. 

\begin{table}[ht]
\centering
\begin{tabular}{|c|ccc|}
		\hline
		I & 1 & 2  & \{1,2\}  \\ \hline
		Analytical   & 0.25 & 0.5 & 0.25 \\ 
		$\lambda = 0$ & 0.25 (0.00) & 0.5 (0.00) & 0.25 (0.00)\\
		$ \lambda = 1$ & 0.25 (0.05) & 0.48 (0.04) & 0.28 (0.03) \\
		$ \lambda = 2$ & 0.26 (0.06) & 0.47 (0.05) & 0.28 (0.04) \\
  	$ \lambda = 4$ & 0.24 (0.08) & 0.46 (0.06) & 0.30 (0.05) \\
		$ \lambda = 8$ & 0.26 (0.10) & 0.44 (0.08) & 0.30 (0.07) \\
		$ \lambda = 16$ & 0.28 (0.14) & 0.40 (0.11) & 0.32 (0.09) \\ \hline
\end{tabular}
\caption{Mean value and standard deviation of the sensitivity indices for the test function $f$ for various values of the noise parameter $\lambda$.}
\label{tab:2}		
\end{table}

%%%%%%%%%%%%%%%%%%%%%%%%%%%%%%%%%%%%%%%%%%%%%%%%%%%%%%%%%%%%%%%%%%%%%%%%%%%%%%
%%%%%%%%%%%%%%%%%%%%%%%%%%%%%%%%%%%%%%%%%%%%%%%%%%%%%%%%%%%%%%%%%%%%%%%%%%%%%%
\section{Concluding remarks}

We introduced in this article a special case of ANOVA kernels with particular univariate kernels so that the $L^2$-orthogonality to constants is respected. This new class of kernels offers good interpretation properties to the models since their functional ANOVA decomposition can be obtained analytically, without the usual recursive integral calculations. Finally, we showed that this natural decomposition of the best predictor benefits the computation of the global sensitivity indices.

\medskip

The use of Riemann sums for the numerical computation of the integrals has been discussed and the error of approximation has shown to be negligible in various examples. Up to calculation or tabulation of the integral of univariate kernels, replacing the usual ANOVA kernels with the ones proposed here may be done at negligible cost in applications, with substantial benefits for the model interpretability and global sensitivity analysis studies.

\medskip

The parameter estimation issue for $K_{ANOVA}^*$ has
not been raised yet in this article. This is however an important point
for the practical use of those kernels. The use of the likelihood theory
has been considered, but many points such as the links between the
optimal parameters for $K$ and the optimal parameters for the associated
$K_{ANOVA}^*$ need to be studied in detail.
Finally, since the pattern of the proof of Proposition~\ref{prop1} can be applied 
to any bounded operator on $\mathcal{H}$, lines of future research include 
focusing on other operators than the integral operator $I$, for example for building
RKHS respecting orthogonality to a family of trend basis functions.

%%%%%%%%%%%%%%%%%%%%%%%%%%%%%%%%%%%%%%%%%%%%%%%%%%%%%%%%%%%%%%%%%%%%%%%%%%%%%%
%%%%%%%%%%%%%%%%%%%%%%%%%%%%%%%%%%%%%%%%%%%%%%%%%%%%%%%%%%%%%%%%%%%%%%%%%%%%%%
\section*{Acknowledgments}
The author would like to thank the reviewers for their valuable comments. Their constructive remarks contributed significantly to the actual version of the article. We are also grateful to James Hensman and Ben Youngman for all their remarks and comments.
\bibliographystyle{plain}
\bibliography{references}

\end{document}